\documentclass[a4paper,fleqn]{cas-dc}

\usepackage[numbers]{natbib}
\usepackage{algorithm}
\usepackage{algorithmic}
\usepackage{amsthm} 
\usepackage{amsmath} 
\usepackage[font=small,labelfont=bf]{caption}

\newtheorem{proposition}{Proposition}
\newtheorem{lemma}{Lemma}

\def\tsc#1{\csdef{#1}{\textsc{\lowercase{#1}}\xspace}}
\tsc{WGM}
\tsc{QE}
\tsc{EP}
\tsc{PMS}
\tsc{BEC}
\tsc{DE}

\begin{document}
\let\WriteBookmarks\relax
\def\floatpagepagefraction{1}
\def\textpagefraction{.001}
\shorttitle{RAGP: Redundancy-Aware Graph Pruning}
\shortauthors{Yaxin Gao et~al.}

\title [mode = title]{Mapping Text to Multiplex Graph: Prompt Compression \\as Lévy Walk-Guided Graph Pruning}                      



\author[1,2]{Yaxin Gao}[style=chinese]
\cormark[1]
\credit{Conceptualization, Methodology, Software, Writing - original draft, Visualization}
\ead{gaoyaxin@zjut.edu.cn}

\author[1,2]{Yao Lu}[style=chinese]
\cormark[1]
\credit{Conceptualization, Methodology, Writing - review & editing}
\ead{yaolu.zjut@gmail.com}

\author[3]{Jinhong Deng}[style=chinese]
\credit{Validation, Formal analysis, Investigation}
\ead{jhdengvision@gmail.com}

\author[1,2]{Jiaqi Nie}[style=chinese]
\credit{Formal analysis, Investigation}
\ead{jiaqinie@zjut.edu.cn}

\author[2]{Zhe Tang}[style=chinese]
\credit{Investigation, Data curation}
\ead{zhe.tang15@student.xjtlu.edu.cn}

\author[2,4]{Jian Zhang}[style=chinese]
\credit{Investigation, Visualization}
\ead{zhangjian-hdu@hdu.edu.cn}

\author[5]{Zhaowei Zhu}[style=chinese]
\ead{zzw@d5data.ai}
\credit{Visualization, Writing - review & editing, Supervision}

\author[1,2]{Shanqing Yu}[style=chinese]
\cormark[2]
\ead{yushanqing@zjut.edu.cn}
\credit{Resources, Writing - review & editing, Supervision}

\author[1,2]{Qi Xuan}[style=chinese]
\ead{xuanqi@zjut.edu.cn}

\author[6,7]{Joey Tianyi Zhou}[style=chinese]
\credit{Resources, Funding acquisition}
\ead{joey.tianyi.zhou@gmail.com}

\affiliation[1]{organization={Institute of Cyberspace Security, Zhejiang University of Technology},
                city={Hangzhou},
                country={China}}
\affiliation[2]{organization={Binjiang Institute of Artificial Intelligence, Zhejiang University of Technology},
                city={Hangzhou},
                country={China}}
\affiliation[3]{organization={University of Electronic Science and Technology of China},
                city={Chengdu},
                country={China}}
\affiliation[4]{organization={School of Cyberspace, Hangzhou Dianzi University},
                city={Hangzhou},
                country={China}}
\affiliation[5]{organization={D5 Data},
                city={Hangzhou},
                country={China}}
\affiliation[6]{organization={Centre for Frontier AI Research, Agency for Science, Technology and Research},
                city={Singapore},
                country={Singapore}}
\affiliation[7]{organization={Institute of High Performance Computing, Agency for Science, Technology and Research},
                city={Singapore},
                country={Singapore}}

\cortext[cor1]{Equal Contribution}
\cortext[cor2]{Corresponding author}

\begin{abstract}
Existing prompt compression methods treat text as flat token sequences,
failing to capture the distributed nature of important information,
which is often spread across multiple locations and connected through
both local syntactic dependencies and global semantic relations.
Such relational structure is naturally represented as a graph,
where tokens or sentences become nodes and their dependencies become edges.
To this end, we propose \textbf{RAGP}, which formulates prompt compression
as \textbf{R}edundancy-\textbf{A}ware \textbf{G}raph \textbf{P}runing on a multiplex graph that jointly models
fine-grained attention-based dependencies and coarse-grained semantic relations.
To efficiently identify non-redundant nodes in this heterogeneous structure
(dense local subgraphs and sparse global connections),
we employ Lévy walks whose heavy-tailed step distribution naturally balances local exploitation with global exploration.
Experiments on LongBench show that RAGP achieves an average score of $49.3$ under a $4\times$ compression ratio, outperforming existing LLM-based compression methods, such as LongLLMLingua, which attains $48.8$ at a $3\times$ compression ratio.
Besides, RAGP also surpasses state-of-the-art vision-based text compression paradigms on multiple tasks.

\end{abstract}

\begin{graphicalabstract}
\centering
\includegraphics[width=\linewidth]{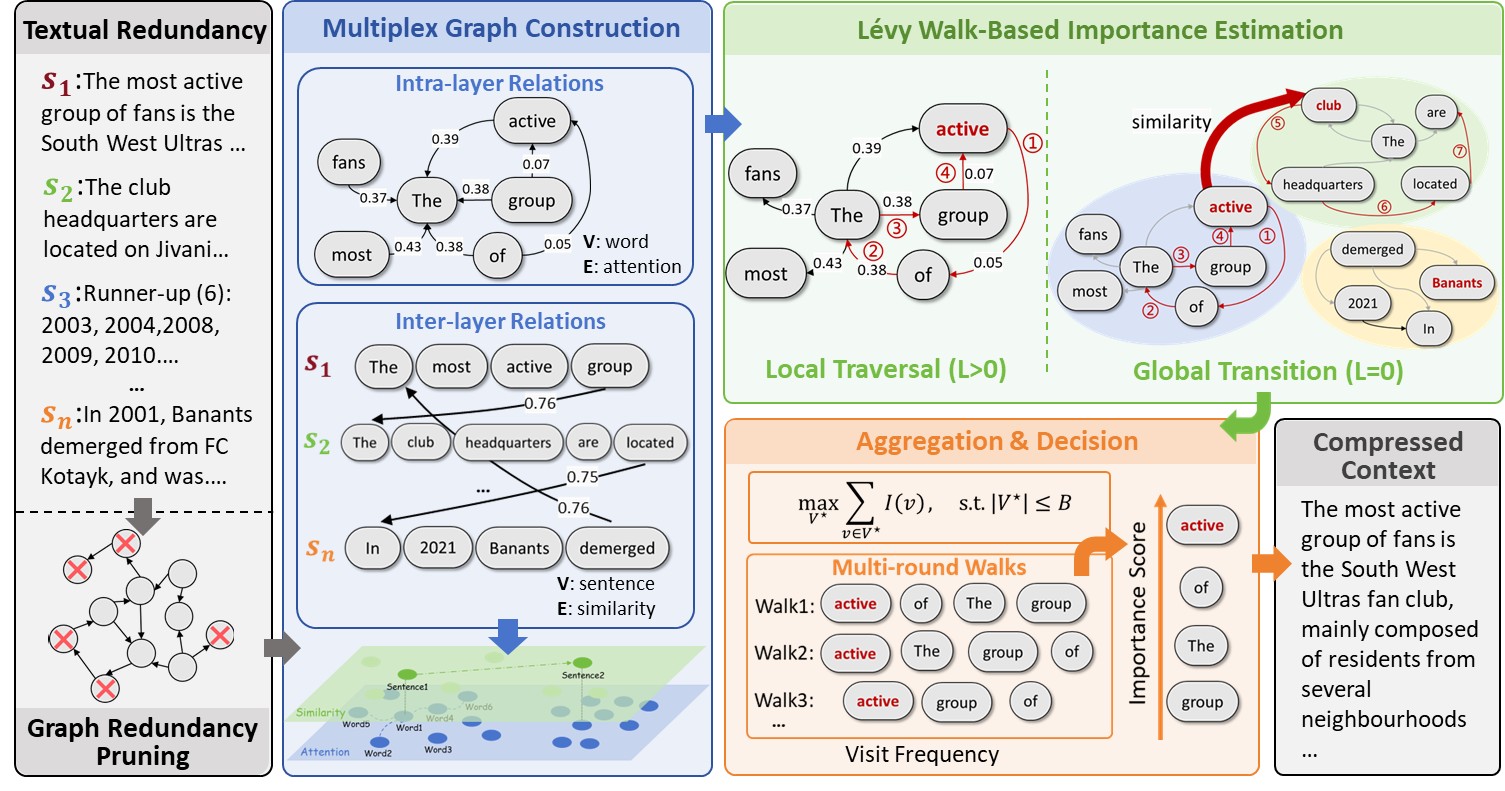}
\end{graphicalabstract}

\begin{highlights}
\item Prompt compression is framed as a redundancy-aware graph pruning problem.
\item Lévy walk-based estimation balances local and global importance exploration.
\item RAGP achieves state-of-the-art results on the LongBench benchmark.
\item Superior performance at 4x compression ratio compared to LLM-based baselines.
\end{highlights}

\begin{keywords}
Prompt compression \sep Large language models \sep Graph pruning \sep Multiplex graph \sep Lévy walk
\end{keywords}

\maketitle

\section{Introduction}

Large language models (LLMs) have achieved remarkable success across a wide range of natural language processing tasks~\cite{devlin2019bert,izacard2020leveraging,zhang2020pegasus,feng2020codebert,li2022competition}. However, their reliance on long and information-rich prompts introduces several practical limitations, including high inference cost, increased latency, excessive memory consumption, and potential performance degradation due to diluted attention over long contexts~\cite{jiang2023llmlingua,pan2024llmlingua,jiang2023longllmlingua,liu2023lost}. These challenges collectively highlight the importance of prompt compression for enabling efficient, accurate, and scalable LLM inference.

Most existing prompt compression approaches operate at the token level~\cite{chen2025dast,jiang2023longllmlingua,pan2024llmlingua,tang2025perception}, aiming to reduce input length by pruning or reweighting tokens based on attention scores~\cite{fang2025attentionrag,honig2025better} or heuristic importance measures~\cite{wang2025beyond,fu2025deep}. 
While effective in reducing computational overhead, these methods typically treat text as a flat sequence, ignoring the key insight that important information in long documents is \emph{distributed} rather than isolated. Such information is intricately connected through both local syntactic dependencies (within sentences) and global semantic relations (across sentences). By ignoring this structured organization, existing token-level methods~\cite{jiang2023longllmlingua,pan2024llmlingua} often make suboptimal decisions: they may retain locally salient but globally redundant tokens, or conversely, discard tokens that appear individually unremarkable but are collectively essential. This limitation motivates a graph-based formulation, where tokens or sentences are modeled as nodes and their dependencies as edges, naturally enabling structure-aware redundancy pruning.

To achieve this structure awareness, we construct a multiplex graph~\cite{melton2023muxgnn,shen2024beyond} that naturally captures the dual nature of document structure: a fine-grained layer models dense local dependencies via attention, while a coarse-grained layer models sparse long-range semantic connections. This formulation transforms prompt compression into a graph-theoretic problem of identifying and pruning redundant nodes while retaining structurally important ones.

However, the resulting graph exhibits heterogeneous connection patterns (i.e., dense local subgraphs and sparse global links), posing a significant challenge to standard graph traversal methods~\cite{page1999pagerank,xing2004weighted} that tend to remain trapped in local neighborhoods. To address this, we propose RAGP, a novel framework that employs stochastic Lévy walks for redundancy-aware multiplex graph pruning. Characterized by heavy-tailed step lengths, Lévy walks naturally alternate between local exploitation and occasional long-range jumps, effectively navigating this heterogeneous graph structure. By selecting nodes that are frequently visited during this process, RAGP effectively filters out redundancy while preserving the distributed semantic structure, yielding a high-quality compressed prompt. In summary, our contributions can be summarized as follows:
\begin{itemize}
    \item We construct a multiplex graph to integrate fine-grained local dependencies with coarse-grained global semantic relations in text, effectively transforming prompt compression into a \emph{redundancy-aware graph pruning} problem on this heterogeneous structure.
    \item We introduce Lévy walk-based importance estimation, whose heavy-tailed step distribution naturally balances local exploitation within dense subgraphs and global exploration across sparse links, effectively navigating the heterogeneous multiplex structure.
    \item Extensive experiments show that RAGP achieves state-of-the-art performance on LongBench, recording an average score of $49.3$ at a $4\times$ compression ratio and outperforming both competitive LLM-based baselines and vision-based text compression methods.
\end{itemize}

\section{Related Work}

\label{sec:Related Work}

\subsection{Prompt Compression}
Prompt compression methods aim to reduce input length while retaining task-relevant information.
Early approaches estimate the importance of tokens or sentences using simple criteria, such as statistical relevance scores~\cite{li2023unlocking,lin2404rho,tang2025perception}, to prune less informative content in a task-agnostic manner.
Later methods incorporate internal signals from neural language models to guide compression decisions, exploiting model-internal cues including attention distributions~\cite{zhao2025leveraging,chen2025pis,honig2025better} or loss sensitivity~\cite{quancai2025discomp}, either directly for inference-time pruning~\cite{ma2025cot,kang2025c3ot,zhao2025can} or as supervision for lightweight compressors.
More recent frameworks integrate multiple signals within multi-stage pipelines~\cite{jiang2023llmlingua,pan2024llmlingua,jiang2023longllmlingua}, combining coarse-grained filtering with fine-grained refinement.
PartPrompt~\cite{mao2025parse} further explores structure-guided prompt compression by introducing parse trees and formulating compression as a tree construction and pruning problem.
These studies suggest that structural information is useful for prompt compression, but multi-granularity semantic relations and redundancy across distant prompt segments remain insufficiently explored.

\subsection{Hierarchical Text Modeling}
Hierarchical modeling captures multi-level semantic organization in long documents.
Prior work~\cite{ruan2022histruct+,zangari2024hierarchical,ahmad2025hierarchical,zhao2025can} models text at multiple granularities (tokens, sentences, paragraphs) to capture both local semantic organization and document-level dependencies.
Such hierarchical representations improve the modeling of long-range interactions while preserving fine-grained contextual information that is often lost in flat sequential encoding schemes.
Recent approaches further organize these units through multiplex or multi-relational structures~\cite{sha2024hierarchical,yu2022multiplex,behrouz2022cs,shen2024beyond}, co-embedding different types of relations as interdependent layers.
Despite these advances, most hierarchical text modeling methods are designed for representation learning, document understanding, or downstream prediction, rather than for compression under explicit token budgets.
As a result, they usually preserve or encode multi-level textual information instead of deciding which units should be removed, merged, or retained during prompt compression.
This distinction is important because prompt compression requires not only modeling hierarchical dependencies, but also estimating redundancy and task relevance across levels.
Therefore, hierarchical modeling provides useful structural motivation, but efficient budget-constrained compression remains a separate problem.

\subsection{Graph-based Methods and Traversal Strategies}
Graph-based representations provide explicit structural abstraction for text~\cite{ruan2022histruct+,zhao2025leveraging}, modeling words or sentences as nodes connected by syntactic, semantic, or similarity-based edges.
Such formulations support multi-level interactions and long-range dependencies~\cite{yu2022multiplex}, and have been applied to text classification~\cite{onan2023hierarchical}, summarization~\cite{ruan2022histruct+}, and question answering~\cite{sui2025fidelis,xu2025harnessing}.
Recent graph-based LLM systems further extend this idea to retrieval, reasoning, and context organization. 
Graph-RAG methods organize external corpora into entity or document graphs for multi-hop retrieval and question answering~\cite{edge2024local}. 
Zarrinkia et al.~\cite{zarrinkia2026reasoning} further identify a reasoning bottleneck in Graph-RAG and propose structured prompting with graph-walk context compression for multi-hop QA. 
However, these methods mainly operate on retrieved knowledge-graph contexts rather than directly pruning redundancy inside the input prompt.
For prompt compression, Prompt-SAW~\cite{ali2024prompt} constructs relation-aware graphs from entity-relation triplets and selects subgraphs via similarity-based scoring. 
AttnComp~\cite{zhao2025leveraging} uses query-to-context attention and graph-based token clustering to form semantic units for prompt compression. 
These methods show the value of structural dependencies, but they mainly focus on entity-relation graphs, token clustering, or deterministic scoring, without jointly modeling fine-grained local dependencies and sentence-level semantic transitions in a unified graph structure.
A key challenge in graph-based importance estimation is how to traverse the graph effectively. 
Standard random-walk and diffusion-based methods often rely on local connectivity patterns and homogeneous graph assumptions, which may limit their ability to capture both local semantic coherence and global structural diversity in hierarchical text graphs. 
This limitation suggests that prompt compression requires traversal strategies that can jointly preserve local semantic coherence and global evidence coverage under strict token budgets.

\subsection{Efficient Exploration in Multiplex Networks}
Efficient exploration strategies on complex networks have been widely studied in network science and information retrieval. Traditional random walk methods estimate node importance through local neighborhood transitions~\cite{brin1998anatomy,tong2006fast}, while PageRank-based approaches further model global importance propagation via iterative diffusion over graph structures~\cite{bianchini2005inside}.
These methods have been widely applied to ranking, recommendation, and graph representation learning.
However, both standard random walks and PageRank variants often rely on local connectivity patterns and homogeneous graph assumptions, which may lead to limited exploration capability on sparse or highly clustered graphs.
To improve exploration efficiency, stochastic traversal strategies with long-range transitions have been introduced.
Among them, Lévy walks employ heavy-tailed step-length distributions that combine frequent short-range exploration with occasional long-distance jumps~\cite{guo2016levy,roy2020learning}.
Recent studies further extend Lévy walk-based exploration to multiplex networks for modeling heterogeneous interactions across multiple graph layers~\cite{guo2016levy,roy2020learning,boccaletti2014structure}.
Compared with conventional random walk and diffusion strategies, Lévy walks have shown stronger capability in balancing local exploration and global coverage, especially in complex networks with sparse long-range connections.
Nevertheless, existing approaches mainly focus on generic diffusion or navigation problems and are rarely designed for long-context text compression.
Moreover, many methods assume homogeneous node correspondence or fixed inter-layer transitions, which may limit their ability to model hierarchical semantic structures and adaptive information selection in textual graphs~\cite{wang2020heterogeneous,jing2021multiplex}.
These limitations suggest that long-context prompt compression requires traversal strategies that can operate over cross-granularity textual graphs while balancing local semantic coherence and global information coverage.
This setting is also different from GNN-based text modeling: GNNs usually learn node representations through message passing and task-specific supervision, whereas graph-based prompt pruning uses graph structure as an inference-time mechanism for selecting content under a token budget.

\begin{figure*}[!t]
    \centering
    \includegraphics[width=1\textwidth]{frame.jpg}
    \caption{Framework of the proposed approach.
Textual redundancy is converted into graph redundancy pruning on a multiplex graph, where multi-round Lévy walks estimate node importance for text compression.
}
    \label{fig:overview}
\end{figure*}

\section{Multiplex Graph Construction}

\label{sec:graph_construction}
We propose RAGP, a graph-based prompt compression method that models long prompts as a multiplex graph and estimates fine-grained importance via stochastic Lévy walks, as illustrated in Figure~\ref{fig:overview}.
We begin by describing the multiplex graph construction, which forms the basis for subsequent importance estimation and compression.

\noindent\textbf{Notations.}
Let the input prompt be a sequence of $K$ sentences $\mathcal{D} = \{ s_1, s_2, \dots, s_K \}$, where each sentence $s_k$ consists of semantic units (e.g., words or subwords), denoted as $V^{(0)}_{s_k} = \{ v_{k,1}, \dots, v_{k,n_k} \}$. Prompt compression aims to select a subset of semantic units under a given budget while preserving task-relevant information. To model dependencies at multiple granularities, we represent the prompt as a multiplex graph $\mathcal{G} = \{ G^{(0)}, G^{(1)} \}$, where the fine-grained layer $G^{(0)}$ contains semantic-unit nodes and the coarse-grained layer $G^{(1)}$ contains sentence nodes. All notations used in the paper are summarized in Appendix~\ref{app:notation}.

We construct a multiplex graph over the input prompt to capture semantic dependencies at multiple granularities.  
To ensure scalability on long contexts, we apply a lightweight pre-filtering step that removes sentences with low query relevance, bounding the graph size before analysis.  
This preserves content for fine-grained importance estimation.

Based on the retained content, we instantiate semantic units at the word level (each possibly composed of multiple subword tokens) and coarse-grained nodes at the sentence level.
The resulting multiplex graph is denoted $\mathcal{G} = { G^{(0)}, G^{(1)} }$, where the fine-grained layer $G^{(0)} = (V^{(0)}, E^{(0)})$ encodes local dependencies among semantic units, and the coarse-grained layer $G^{(1)} = (V^{(1)}, E^{(1)})$ captures global semantic relations across sentences.

\paragraph{Intra-layer Relations (Local Structure).}
The fine-grained layer encodes local semantic dependencies among words. Each node represents a semantic unit instantiated at the word level, which may consist of one or more subword tokens.
Edges in the fine-grained layer are instantiated based on attention patterns
produced by a pretrained language model. These edge weights represent the contextual dependency between semantic units (words/subwords). For two words \( v_i \) and \( v_j \) that belong to the same sentence \( s \), we introduce an edge
\( (v_i, v_j) \in E^{(0)}_s \).
The corresponding edge weight is defined by aggregating token-level attention
scores between all subword tokens of \( v_i \) and \( v_j \):
\begin{equation}
\begin{aligned}
w^{(0)}(v_i, v_j)
&= \frac{1}{|\mathrm{Tok}(v_i)| \, |\mathrm{Tok}(v_j)|} \\
&\quad \times
\sum_{t \in \mathrm{Tok}(v_i)}
\sum_{t' \in \mathrm{Tok}(v_j)}
\mathrm{Attn}(t, t').
\end{aligned}
\end{equation}
where $\mathrm{Tok}(v)$ denotes the set of subword tokens composing semantic unit $v$, and $\mathrm{Attn}(t, t')$ denotes the attention weight from token $t$ to $t'$, averaged across all attention heads of a pretrained language model.
Based on this aggregation, to mitigate noise induced by weak or spurious attention links, we apply a sparsification step that retains only the top $\delta\%$ of edges ranked by weight, i.e., \( (v_i, v_j) \in E^{(0)} \) if \( w^{(0)}(v_i, v_j) \) is among the top $\delta\%$.
This aggregation yields a dense word-level graph within each local context,
where attention-induced neighborhood overlap leads to substantial redundancy
among word nodes.
While attention provides a general and task-agnostic signal for modeling local dependencies, the graph construction can be flexibly augmented with task-specific cues when available.
For example, in code-related tasks, syntactic or structural relations derived from the abstract syntax tree (AST), such as data or control dependencies, can be readily incorporated to complement attention-based connectivity.

\paragraph{Inter-layer Relations (Global Structure).}
The coarse-grained layer \( G^{(1)} = (V^{(1)}, E^{(1)}) \) captures global semantic dependencies across sentences.
Each node \( s \in V^{(1)} \) corresponds to a sentence in the input text, serving as its graph-level representation.
For two sentence nodes $s, s' \in V^{(1)}$, we introduce a directed edge $(s, s') \in E^{(1)}$ if $s'$ is among the top similar sentences of $s$.
Sentence-level semantic similarity is computed using all-MiniLM-L6-v2 sentence embeddings with cosine similarity, and the edge weight is defined as
\begin{equation}
    w^{(1)}(s, s') = \mathrm{Sim}(s, s') = \cos(\mathbf{e}_s, \mathbf{e}_{s'}),
\end{equation}
where $\mathbf{e}_s$ denotes the sentence embedding of $s$.
To limit global connectivity, we retain only the top $30\%$ of inter-sentence edges based on semantic similarity, producing a sparse graph that facilitates long-range information flow while complementing local dependencies..

\paragraph{Multiplex Coupling.}
The two layers are coupled via the hierarchical mapping \( \pi : V^{(0)} \rightarrow V^{(1)} \), assigning each fine-grained node to its corresponding sentence.
This structure enables traversal across dense local neighborhoods and sparse global links, naturally motivating Lévy walks for importance estimation and providing a heterogeneous foundation for graph-based node selection.

\section{Lévy Walk-based Importance Estimation}

\paragraph{Why Lévy Walks on Multiplex Graphs?}

We provide intuition for why Lévy walks are well-suited for importance estimation on multiplex text graphs.
The key insight is that the performance advantage of Lévy walks depends on the \emph{heterogeneity} of the multiplex graph structure.

\begin{proposition}[Sentence Coverage]
\label{prop:coverage}
Let $\mathcal{G}$ be a multiplex graph with $K$ sentences,
average intra-sentence degree $d_{\text{local}}$, and average inter-sentence degree $d_{\text{global}}$.
Define the heterogeneity ratio $\eta = d_{\text{local}}/d_{\text{global}}$.
The expected number of steps to visit all sentences satisfies:
\begin{align}
T_{\text{RW}} &= \Theta(K \cdot \eta \cdot \ln K), \\
T_{\text{L\'{e}vy}} &= \Theta\left(K \cdot \frac{\mu-1}{\mu-2} \cdot \ln K\right) \quad \text{for } \mu > 2.
\end{align}
Thus, Lévy walk achieves faster coverage when $\eta > (\mu-1)/(\mu-2)$, as the $\ln K$ factors cancel in the speedup ratio.
\end{proposition}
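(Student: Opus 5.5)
This is a heuristic, mean-field statement, so the plan is to reduce both walks to a coupon-collector process on the $K$ sentence nodes. I will use this idealization: sentences are statistically exchangeable, and once the walker leaves its current sentence, its landing sentence is approximately uniform over the $K$ sentences. This reflects the sparsified top-$30\%$ similarity edges, which we model as a well-mixed random graph on $V^{(1)}$. Under this assumption the number of \emph{sentence-changing events} needed to see all $K$ sentences is $K H_K = \Theta(K\ln K)$ by the standard coupon-collector argument. The total coverage time is then this count multiplied by the expected number of walk steps between consecutive sentence-changing events. By Wald's identity, if the inter-event times are i.i.d. with mean $\tau$, then $T = \Theta(\tau\, K\ln K)$. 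The proposition therefore reduces to computing $\tau_{\text{RW}}$ and $\tau_{\text{L\'evy}}$.

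\textbf{Random walk.} At a word node with $d_{\text{local}}$ intra-sentence neighbours and $d_{\text{global}}$ inter-sentence neighbours, a simple random walk leaves the sentence with probability $p = d_{\text{global}}/(d_{\text{local}}+d_{\text{global}}) = 1/(1+\eta)$. The inter-sentence edges here are those attached to a word through the coupling $\pi$. Treating each step as an independent Bernoulli trial, which is justified because the intra-sentence subgraph is dense and mixes quickly, the sojourn time in a sentence is geometric with mean $1+\eta$. This gives $\tau_{\text{RW}} = \Theta(\eta)$ whenever $\eta \gtrsim 1$, and hence $T_{\text{RW}} = \Theta(K\eta\ln K)$.

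\textbf{L\'evy walk.} Each move draws a step length $\ell$ with $P(\ell) \propto \ell^{-\mu}$, $\ell \ge 1$. The key modelling choice is that any move with $\ell \ge 2$ traverses at least one global link and so changes sentence. A move with $\ell = 1$ behaves like a single random-walk step and leaves the sentence with probability $1/(1+\eta)$. A single move therefore changes sentence with probability bounded below by a constant $c_\mu = 1 - 1/\zeta(\mu)$, independent of $\eta$. The cost of one move in elementary steps is $\mathbb{E}[\ell] = \zeta(\mu-1)/\zeta(\mu)$. For the continuous Pareto version $P(\ell) = (\mu-1)\ell^{-\mu}$ this equals $(\mu-1)/(\mu-2)$, which is finite exactly when $\mu > 2$. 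Hence $\tau_{\text{L\'evy}} = \Theta\bigl((\mu-1)/(\mu-2)\bigr)$ and $T_{\text{L\'evy}} = \Theta\bigl(K\frac{\mu-1}{\mu-2}\ln K\bigr)$.

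Dividing the two expressions cancels the common $K\ln K$ factor. The ratio $T_{\text{RW}}/T_{\text{L\'evy}} = \Theta\bigl(\eta(\mu-2)/(\mu-1)\bigr)$ then gives the stated threshold $\eta > (\mu-1)/(\mu-2)$, up to constants.

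The main obstacle is making the uniform-landing and independence assumptions honest. A real walk on the top-$30\%$ similarity graph could revisit nearby sentences, which would inflate the coupon-collector count. A long jump could also stay within a sentence, which would weaken the bound $c_\mu$. I would handle this by stating the result explicitly under the mean-field assumption. If more rigour is wanted, I would replace the uniform landing distribution by the assumption that the induced sentence-level chain has a spectral gap bounded away from zero. I would then invoke the Matthews cover-time bound, which gives $\Theta(\text{hitting time}\cdot\ln K)$ and with bounded gap yields $\Theta(K\ln K)$ event counts. For the L\'evy case I would treat $\mu$ as fixed, so that constants like $c_\mu$ are absorbed into $\Theta$.
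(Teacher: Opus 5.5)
Your overall skeleton and your random-walk half match the paper's argument. The skeleton is: $\Theta(K\ln K)$ sentence changes by coupon collecting, multiplied by the mean time between changes via Wald. The random-walk half is: escape probability $1/(1+\eta)$, geometric sojourn of mean $\Theta(\eta)$, hence $T_{\text{RW}}=\Theta(K\eta\ln K)$. The gap is in the L\'evy half, where you analyze a different process from the one in the proposition. In RAGP's walk (Algorithm~\ref{alg:levy_walk}), the segment length $L$ counts \emph{local} steps, which are confined to $G^{(0)}_{s}$ by construction, so a long segment never leaves the sentence. The sentence change is the global transition, performed unconditionally when the counter reaches $0$. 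Your key modelling step says that any move with $\ell\ge 2$ traverses at least one global link, while $\ell=1$ acts like a random-walk step. That describes a L\'evy flight with hop-distance jumps. It is false for this walk, and, as you note yourself, it is not guaranteed even for a flight.

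This changes the result. In your model, Wald gives a mean time between sentence changes of $\mathbb{E}[\ell]/q$ with $q=1-\eta/\bigl((1+\eta)\zeta(\mu)\bigr)$. For large $\eta$ this is about $\zeta(\mu-1)/(\zeta(\mu)-1)$, roughly $7.65$ at $\mu=2.5$ rather than $3$. As $\mu\to\infty$ it tends to $1+\eta$, so the walk degenerates into the random walk, whereas the stated factor $(\mu-1)/(\mu-2)$ tends to $1$. You recover the statement only by absorbing $1/c_\mu$ into $\Theta$, which empties the explicit factor $(\mu-1)/(\mu-2)$ and the threshold $\eta>(\mu-1)/(\mu-2)$ of content.

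The idea you are missing, which the paper uses, is that jumps occur at deterministic segment ends, independent of the local graph. Inter-jump times are therefore i.i.d.\ copies of $L+1$, independent of where the walker lands. Wald then gives $T_{\text{L\'{e}vy}}=\mathbb{E}[L+1]\,\mathbb{E}[J]$, with no $\eta$ and no $c_\mu$. With the algorithm's actual sampler $L=\lfloor X\rfloor$, $P(X\ge x)=x^{-(\mu-1)}$, you get $\mathbb{E}[L]=\sum_{l\ge1}l^{-(\mu-1)}=\zeta(\mu-1)\in\bigl[\max\{1,\tfrac{1}{\mu-2}\},\tfrac{\mu-1}{\mu-2}\bigr]$. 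So the factor holds up to an absolute constant, uniformly in $\mu>2$.

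For $\mathbb{E}[J]$, your spectral-gap/Matthews route is a legitimate and more careful substitute for the paper's weighted coupon collector. That argument uses $p_{\min}=\Omega\bigl(w_{\min}/(Kw_{\max})\bigr)$ and implicitly treats successive jumps as i.i.d.\ draws rather than as a Markov chain.
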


\textbf{Interpretation.}
The speedup ratio $\eta(\mu-2)/(\mu-1)$ reveals the key factors affecting performance:
\begin{itemize}
\item \textbf{Document structure} ($\eta$): Long documents with diverse topics exhibit high $\eta$,
as sentences are internally coherent ($d_{\text{local}}$ high) but loosely connected globally ($d_{\text{global}}$ low).
This is precisely where Lévy walks excel.
\item \textbf{Lévy exponent} ($\mu$): Controls the exploration-exploitation balance.
Smaller $\mu$ increases exploration (more frequent global jumps),
while larger $\mu$ favors exploitation (longer local traversal).
For typical long-context tasks, $\mu \approx 2.5$ provides a good balance.
\end{itemize}
Conversely, for short documents with uniform structure ($\eta \approx 1$),
both methods perform comparably.
For $\mu = 2.5$, the threshold is $\eta > 3$; empirically, we observe $\eta \in [15, 50]$ on LongBench tasks, well above this threshold (see Appendix~\ref{app:proof} for analysis and document-type predictions).

\textbf{Proof sketch.}
A standard random walk on $G^{(0)}$ transitions between sentences with probability $\approx 1/\eta$,
leading to expected waiting time $\approx \eta$ per new sentence.
By coupon collector arguments, visiting all $K$ sentences requires $T_{\text{RW}} = \Theta(K \cdot \eta \cdot \ln K)$ steps.

In contrast, Lévy walks perform a global jump at the end of each segment,
with expected segment length $\mathbb{E}[L] = (\mu-1)/(\mu-2)$ for $\mu > 2$.
This yields $T_{\text{L\'{e}vy}} = \Theta(K \cdot \mathbb{E}[L] \cdot \ln K)$.
Since both formulas share the $\ln K$ factor, the speedup ratio simplifies to $\eta / \mathbb{E}[L] = \eta(\mu-2)/(\mu-1)$, independent of $K$.
The full proof is in Appendix~\ref{app:proof}.

From a foraging theory perspective~\cite{viswanathan1999optimizing},
Lévy flights are optimal for locating sparsely distributed targets in heterogeneous environments.
Our multiplex graph exhibits analogous structure:
important semantic units are \emph{sparse} at the global scale (across sentences)
but \emph{clustered} at the local scale (within each sentence).

\begin{algorithm}[!t]
\caption{L\'evy Walk-based Importance Estimation}
\label{alg:levy_walk}
\begin{algorithmic}[1]
\REQUIRE Multiplex graph $\mathcal{G} = \{G^{(0)}, G^{(1)}, \pi\}$;
number of walks $N$; walk length $T$; L\'evy exponent $\mu$
\ENSURE Importance scores $\{ I(v) \mid v \in V^{(0)} \}$
\STATE Initialize $c(v) \gets 0$ for all $v \in V^{(0)}$
\FOR{$i = 1$ to $N$}
    \STATE Sample $v \sim \mathcal{U}(V^{(0)})$, set $s \gets \pi(v)$
    \STATE Sample $L \gets \lfloor (1-\mathcal{U}(0,1))^{-\frac{1}{\mu-1}} \rfloor$
    \FOR{$t = 1$ to $T$}
        \STATE $c(v) \gets c(v) + 1$
        \IF{$L > 0$}
            \STATE Sample $v' \in V^{(0)}_s$ with prob. $\propto w^{(0)}(v,v')$, set $v \gets v'$
            \STATE $L \gets L - 1$
        \ELSE
            \STATE Sample $s' \in V^{(1)}$ with prob. $\propto w^{(1)}(s,s')$
            \STATE Sample $v' \in V^{(0)}_{s'}$ uniformly, set $(v,s) \gets (v',s')$
            \STATE Reset $L \gets \lfloor (1-\mathcal{U}(0,1))^{-\frac{1}{\mu-1}} \rfloor$
        \ENDIF
    \ENDFOR
\ENDFOR
\STATE Normalize: $I(v) = c(v)/\sum_{v' \in V^{(0)}} c(v')$
\STATE \textbf{return} $I(v)$
\end{algorithmic}
\end{algorithm}

\textbf{Why visit frequency reflects importance.}
Nodes that are (1) well-connected within their local context, or
(2) serve as semantic bridges between sentences,
will be visited more frequently during the walk.
Conversely, redundant nodes---whose information is captured by their neighbors---have
their visits diluted across alternatives.
Thus, visit frequency serves as a natural proxy for non-redundant informativeness.

We empirically validate this hypothesis through correlation analysis on the CNN/DailyMail test set ($N=500$).
As shown in Table~\ref{tab:visit_correlation}, the Spearman correlation between visit-frequency-based importance scores and human-annotated salience labels exhibits a mean value of 0.5046 and a median of 0.5367.
Furthermore, 97.8\% of samples show positive correlation, with 75.2\% reaching statistical significance ($p < 0.05$), indicating that traversal frequency is a reliable proxy for content importance estimation.

\paragraph{Method Overview.}
We estimate the importance of fine-grained nodes \(v \in V^{(0)}\) by defining a stochastic Lévy walk process on the multiplex graph \( \mathcal{G} \), where each fine-grained node is associated with a sentence \(s = \pi(v) \in V^{(1)}\). The goal is to assign an importance score to each node in \( V^{(0)} \) such that removing low-importance nodes yields a compact subgraph that preserves both local semantic coherence and global dependencies.
The overall procedure is summarized in Algorithm~\ref{alg:levy_walk}, with an illustrative example of the Lévy walk traversal shown in Figure~\ref{fig:example}.

\begin{table}[t]
\centering
\small
\begin{tabular}{lc}
\hline
\textbf{Metric} & \textbf{Value} \\
\hline
Spearman correlation (mean) & 0.5046 \\
Spearman correlation (median) & 0.5367 \\
Positive correlation ratio & 97.8\% \\
Statistically significant ($p < 0.05$) & 75.2\% \\
\hline
\end{tabular}
\caption{Correlation analysis between sentence coverage time (visit frequency) and human-annotated importance labels on the CNN/DailyMail test set ($N=500$).}
\label{tab:visit_correlation}
\end{table}

\vspace{1em} 
\begin{center}
    \begin{minipage}{0.48\textwidth}
        \centering
        \includegraphics[width=\linewidth]{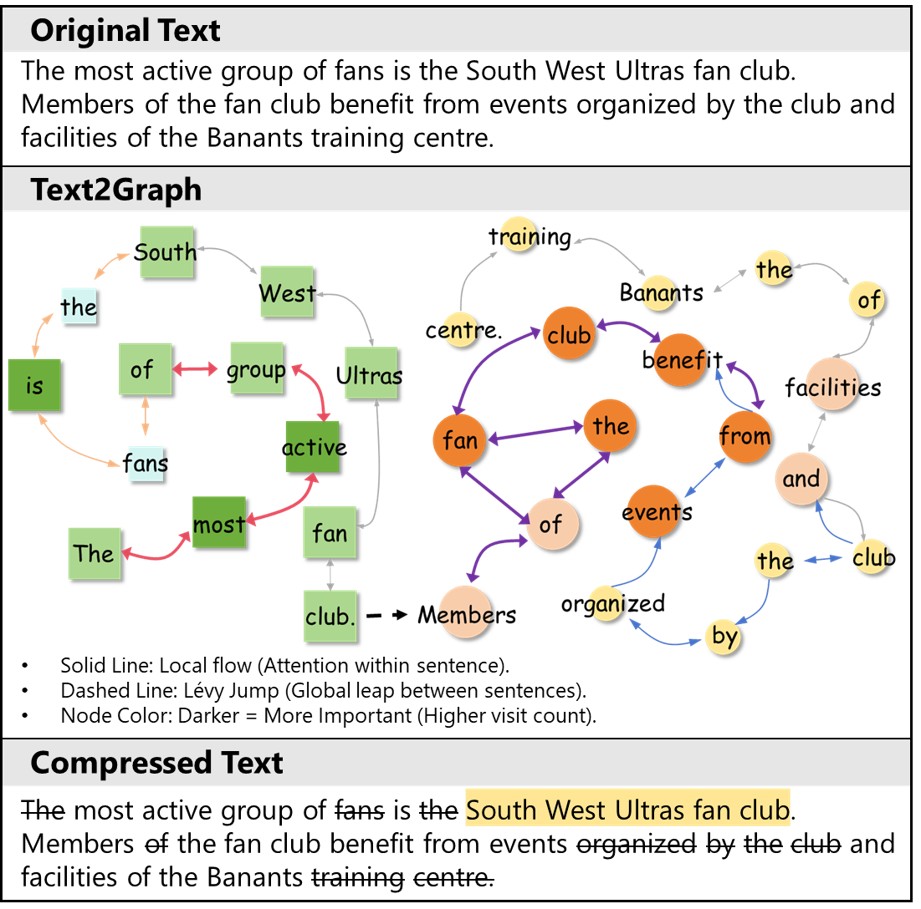}
        \captionof{figure}{Illustration of L\'{e}vy walk traversal. Solid edges show local exploration, dashed edges represent long-range jumps. Edge intensity reflects attention strength, and darker nodes are visited more frequently and retained after compression.}
        \label{fig:example}
    \end{minipage}
\end{center}
\vspace{1em}

\begin{table*}[t]
  \centering
  \small
  \caption{Performance comparison of different methods on the LongBench benchmark using GPT-3.5-Turbo. The symbol \textsuperscript{*} indicates results reproduced by our implementation. The best results are marked in \textbf{boldface}, and the sub-optimal ones are \underline{underlined}.}
  \vspace{-1mm}
    \begin{tabular}{l ccccccc cc}
    \toprule
    \multicolumn{1}{c}{\multirow{2}[4]{*}{\textbf{Methods}}} & \multicolumn{9}{c}{\textbf{LongBench}} \\
\cmidrule{2-10}          & SingleQA & MutilQA & Summ. & FewShot & Synth. & Code  & AVG   & Tokens & Ratio \\
    \midrule
    \multicolumn{10}{c}{\textbf{2,000 tokens constraint}} \\
    \midrule
    \multicolumn{10}{l}{\textit{Retrieval-based Methods}} \\
    BM25  & 30.1  & 29.4  & 21.2  & 19.5  & 12.4  & 29.1  & 23.6  & 1985  & 5× \\
    SBERT & 33.8  & 35.9  & 25.9  & 23.5  & 18.0  & 17.8  & 25.8  & 1947  & 5× \\
    OpenAI & 34.3  & 36.3  & 24.7  & 32.4  & 26.3  & 24.8  & 29.8  & 1991  & 5× \\
    \multicolumn{10}{l}{\textit{Compression-based Methods}} \\
    Selective-Context & 16.2  & 34.8  & 24.4  & 15.7  & 8.4   & 49.2  & 24.8  & 1925  & 5× \\
    LLMLingua & 22.4  & 32.1  & 24.5  & 61.2  & 10.4  & 56.8  & 34.6  & 1950  & 5× \\
    LLMLingua-2 & 29.8  & 33.1  & 25.3  & 66.4  & 21.3  & \textbf{58.9} & 39.1  & 1954  & 5× \\
    LongLLMLingua & 39.0  & 42.2  & \textbf{27.4} & \textbf{69.3} & 53.8  & 56.6  & \underline{48.0}  & 1809  & 6× \\
    CPC\textsuperscript{*}   & 36.9  & \textbf{44.8} & 23.3  & 58.8  & \underline{56.5}  & 45.6  & 44.3  & 1844  & 5× \\
    EFPC  & \underline{41.7}  & 42.4  & 25.8  & 67.3  & 27.0  & \underline{57.6}  & 43.6  & 1972  & 5× \\
    GPT-C & 36.3  & 38.5  & \underline{27.1}  & \underline{67.7}  & 46.3  & 53.1  & 44.8  & 1898  & 5× \\
    \rowcolor{gray!20}
    \textbf{RAGP (Ours)} & \textbf{41.9} & \underline{42.7}  & 22.4  & 62.1  & \textbf{62.0} & 57.1  & \textbf{48.1} & 1793  & \textbf{6×} \\
    \midrule
    \multicolumn{10}{c}{\textbf{3,000 tokens constraint}} \\
    \midrule
    \multicolumn{10}{l}{\textit{Retrieval-based Methods}} \\
    BM25  & 32.3  & 34.3  & 25.3  & 57.9  & 45.1  & 48.9  & 40.6  & 3417  & 3× \\
    SBERT & 35.3  & 37.4  & 26.7  & 63.4  & 51.0  & 34.5  & 41.4  & 3399  & 3× \\
    OpenAI & 34.5  & 38.6  & 26.8  & 63.4  & 49.6  & 37.6  & 41.7  & 3421  & 3× \\
    \multicolumn{10}{l}{\textit{Compression-based Methods}} \\
    Selective-Context & 23.3  & 39.2  & 25.0  & 23.8  & 27.5  & 53.1  & 32.0  & 3328  & 3× \\
    LLMLingua & 31.8  & 37.5  & 26.2  & 67.2  & 8.3   & 53.2  & 37.4  & 3421  & 3× \\
    LLMLingua-2 & 35.5  & 38.7  & 26.3  & 69.6  & 21.4  & 62.8  & 42.2  & 3392  & 3× \\
    LongLLMLingua & 40.7  & \underline{46.2} & \textbf{27.2} & \textbf{70.6} & 53.0  & 55.2  & \underline{48.8}  & 3283  & 3× \\
    CPC\textsuperscript{*}   & 41.2  & 46.1  & 23.7  & 61.7  & \underline{55.8}  & 50.9  & 46.5  & 3327  & 3× \\
    GPT-C & 39.9  & 40.6  & \underline{27.0}  & 68.4  & 48.1  &\underline{58.4}  & 47.1  & 3187  & 3× \\
    \rowcolor{gray!20}
    \textbf{RAGP (Ours)} & \textbf{43.6} & 44.7  & 22.7  & 64.2  & \textbf{62.5} & \underline{58.4}  & \textbf{49.3} & 2691  & \textbf{4×} \\
    \bottomrule
    \end{tabular}%
  \label{tab:main1}%
\end{table*}%

\paragraph{Stochastic Exploration on Multiplex Graphs.}
A Lévy walk is characterized by heavy-tailed step lengths, resulting in frequent short-range transitions interspersed with occasional long-range jumps~\cite{viswanathan1999optimizing,zaburdaev2015levy}.
This property naturally matches the heterogeneous structure of the multiplex graph, where intra-context subgraphs are dense and inter-context connections are sparse.
Formally, at step \( t \), the walker state is defined as $(v_t, s_t, L_t)$,
where \( v_t \in V^{(0)} \) is the current fine-grained node, \( s_t = \pi(v_t) \in V^{(1)} \) is its associated sentence node, and \( L_t \in \mathbb{N} \) denotes the remaining length of the current Lévy segment.
Segment lengths are drawn from a power-law distribution

\begin{equation}
   P(L = l) \propto l^{-\mu}, \quad \mu > 2,
\end{equation}
where larger \(\mu\) biases the walk towards shorter 
steps, while smaller \(\mu\) (closer to 2) allows more long-range jumps.

\paragraph{Transition Dynamics.}
We define a stochastic traversal process over the multiplex graph $\mathcal{G} = \{ G^{(0)}, G^{(1)} \}$.
At each step $t$, the walker maintains a fine-grained node $v_t \in V^{(0)}$ and its associated sentence node $s_t = \pi(v_t) \in V^{(1)}$.

At the beginning of each Lévy segment, we sample a random variable $r \sim \mathcal{U}(0,1)$ and derive a power-law-distributed segment length
\begin{equation}
L = \left\lfloor (1 - r)^{-\frac{1}{\mu - 1}} \right\rfloor,
\end{equation}
where $\mu > 2$ controls the tail heaviness of the distribution (ensuring finite expected segment length).
The variable $L$ specifies the number of consecutive local transitions to be performed within the current sentence $s_t$.

\textbf{Local traversal.}
While $L > 0$, the walker remains within the fine-grained subgraph $G^{(0)}_{s_t}$ induced by sentence node $s_t$.
At each step, a neighboring fine-grained node $v_{t+1} \in V^{(0)}_{s_t}$ is sampled according to the row-normalized edge weights:
\begin{equation}
\mathbb{P}(v_{t+1} = v' \mid v_t, s_t)
= \frac{w^{(0)}(v_t, v')}{\sum_{v'' \in V^{(0)}_{s_t}} w^{(0)}(v_t, v'')},
\end{equation}
where $w^{(0)}(v_t, v')$ is the weight of the edge $(v_t, v')$ in the fine-grained layer.
The segment length is then decremented as $L \leftarrow L - 1$.

\textbf{Global transition.}
When the Lévy segment is exhausted ($L = 0$), the walker performs a coarse-grained transition by sampling a sentence-level node $s_{t+1} \in V^{(1)}$ from the coarse-grained graph $G^{(1)}$:
\begin{equation}
\mathbb{P}(s_{t+1} = s')
\propto w^{(1)}(s_t, s'),
\end{equation}
where $w^{(1)}(s_t, s')$ denotes the edge weight in the coarse-grained layer.
A fine-grained node $v_{t+1} \in V^{(0)}_{s_{t+1}}$ is then sampled from the selected sentence, and a new Lévy segment length $L$ is resampled.

This transition mechanism induces frequent short-range exploration within sentences and occasional long-range jumps across sentences, thereby enabling efficient multi-scale information propagation without introducing an explicit local–global balancing parameter.

\begin{table*}[htbp]
  \centering
  \small
  \setlength{\tabcolsep}{3pt}
  \caption{Comparison with PartPrompt on BBCNews and Truncated arXiv under different token constraints. All results are obtained using Mixtral-8x7B-Instruct-v0.1 as the downstream generation model. The best results are marked in \textbf{boldface}, and the sub-optimal ones are \underline{underlined}.}
    \begin{tabular}{ccccccc|cccccc}
    \toprule
    \multirow{2}[2]{*}{Methods} & \multicolumn{6}{c|}{\textbf{BBCNews}}         & \multicolumn{6}{c}{\textbf{truncate arXiv}} \\
          & BLEU  & ROUGE1 & ROUGE2 & ROUGEL & Tokens & $1/\tau$   & BLEU  & ROUGE1 & ROUGE2 & ROUGEL & Tokens & $1/\tau$ \\
    \midrule
    LLM Generation & 11.31 & 42.32 & 18.60 & 39.22 & 261.30 & 3.29  & 8.77  & 39.18 & 16.00 & 36.22 & 643.80 & 4.65 \\
    \midrule
    \multicolumn{13}{c}{20\% tokens constraint} \\
    \midrule
    Selective-Context & 3.58  & 31.63 & 9.22  & 28.73 & 188.60 & 4.56  & 5.28  & 34.86 & 12.25 & 31.48 & 656.60 & 4.56 \\
    LLMLingua & 9.56  & 35.09 & 15.28 & 32.89 & 206.00 & 4.18  & 7.35  & 36.10 & 14.15 & 33.18 & 576.70 & 5.19 \\
    LongLLMLingua & 9.22  & 33.49 & 14.60 & 31.42 & 161.00 & 5.34  & 11.22 & 39.49 & 18.47 & \underline{36.77} & 599.60 & 5.00 \\
    LLMLingua2 & 5.65  & 36.22 & 11.95 & 32.82 & 172.60 & 4.99  & 5.75  & 35.96 & 12.39 & 32.63 & 615.10 & 4.87 \\
    PartPrompt & \underline{13.69} & \underline{43.54} & \underline{22.87} & \textbf{41.05} & 173.70 & 4.96  & \underline{11.73} & \underline{39.83} & \textbf{18.74} & \textbf{37.01} & 595.80 & 5.03 \\
    RAGP(Ours) & \textbf{22.53} & \textbf{52.74} & \textbf{26.70} & \underline{36.74} & 178.23 & 4.24  & \textbf{13.91} & \textbf{44.93} & \underline{17.30} & 27.99 & 644.51 & 4.87 \\
    \midrule
    \multicolumn{13}{c}{30\% tokens constrain} \\
    \midrule
    Selective-Context & 6.68  & 37.49 & 13.54 & 34.15 & 294.20 & 2.93  & 7.65  & 38.14 & 14.92 & 34.65 & 1048.10 & 2.86 \\
    LLMLingua & 10.22 & 38.55 & 16.66 & 35.88 & 278.20 & 3.09  & 10.39 & 39.23 & 17.47 & 36.26 & 867.90 & 3.45 \\
    LongLLMLingua & 10.11 & 36.97 & 15.96 & 34.48 & 257.80 & 3.34  & 14.62 & 42.61 & \underline{22.13} & \underline{39.86} & 894.90 & 3.35 \\
    LLMLingua2 & 9.45  & 41.86 & 16.71 & 38.39 & 258.10 & 3.34  & 7.86  & 38.86 & 14.91 & 35.63 & 898.40 & 3.33 \\
    PartPrompt & \underline{18.38} & \underline{48.57} & \underline{28.39} & \textbf{46.21} & 257.70 & 3.34  & \underline{15.35} & \underline{43.12} & \textbf{22.50} & \textbf{40.26} & 839.40 & 3.57 \\
    RAGP(Ours) & \textbf{24.44} & \textbf{54.68} & \textbf{28.51} & \underline{38.41} & 251.51 & 3.01  & \textbf{16.62} & \textbf{47.39} & 20.31 & 30.60 & 971.41 & 3.29 \\
    \midrule
    \multicolumn{13}{c}{50\% tokens constrain} \\
    \midrule
    Selective-Context & 14.26 & 46.74 & 22.63 & 43.41 & 481.00 & 1.79  & 13.63 & 44.56 & 21.78 & 41.32 & 1763.70 & 1.70 \\
    LLMLingua & 15.71 & 45.98 & 23.21 & 43.17 & 435.80 & 1.97  & 17.26 & 45.85 & 25.00 & 43.08 & 1445.90 & 2.07 \\
    LongLLMLingua & 14.65 & 44.31 & 21.27 & 41.33 & 432.70 & 1.99  & 18.79 & 46.71 & \underline{26.48} & \underline{43.82} & 1497.40 & 2.00 \\
    LLMLingua2 & 17.65 & 50.12 & 26.06 & \underline{46.89} & 430.90 & 2.00  & 14.07 & 44.95 & 21.92 & 42.03 & 1500.90 & 2.00 \\
    PartPrompt & \textbf{28.28} & \underline{56.48} & \textbf{38.30} & \textbf{54.29} & 424.90 & 2.03  & \underline{19.50} & \underline{46.78} & \textbf{26.87} & \textbf{44.26} & 1405.80 & 2.13 \\
    RAGP(Ours) & \underline{28.24} & \textbf{57.68} & \underline{32.16} & 41.50 & 402.84 & 1.93  & \textbf{19.57} & \textbf{49.83} & 23.46 & 33.39 & 1621.64 & 2.00 \\
    \bottomrule
    \end{tabular}%
  \label{tab:PartPrompt}%
\end{table*}%

\begin{table*}[t]
  \centering
  \caption{Performance comparison of different methods on four representative LongBench sub-tasks. The best results are marked in \textbf{boldface}, and the sub-optimal ones are \underline{underlined}.}
  \resizebox{\textwidth}{!}{
  \scriptsize
    \begin{tabular}{lcccccccccccccc}
    \toprule
    \multicolumn{1}{c}{\multirow{2}[4]{*}{\textbf{Model}}} & \multicolumn{4}{c}{\textbf{Single-Doc QA}} & \multicolumn{4}{c}{\textbf{Summ.}}     & \multicolumn{4}{c}{\textbf{Fewshot}}   & \multicolumn{2}{c}{\textbf{Code}} \\
\cmidrule{2-15}    \multicolumn{1}{c}{} & QP    & NQA   & QA Zh & QA En & QSUM  & GovRep & News  & VcSum & TREC  & TriQA & Sam   & Lsht  & RB    & LCC \\
    \midrule
    \multicolumn{15}{c}{\textbf{Full-Context (No Compression)}} \\
    \midrule
    \rowcolor{gray!20}
    GPT-3.5-Turbo & 47.1  & 25.1  & 56.5  & 55.2  & 20.6  & 25.7  & 25.1  & 17.3  & 63.0  & 91.7  & 39.4  & 53.5  & 58.5  & 62.5 \\
    LLaMA-3.1-8B-Instruct & 44.6  & 26.3  & 62.2  & \textbf{55.0} & \textbf{23.3} & \textbf{32.4} & \underline{24.2}  & \underline{16.2}  & 19.3  & 89.1  & 7.6   & 0.0   & 42.8  & 46.4 \\
    Qwen2.5-7B-Instruct-1M & \underline{45.3}  & 25.6  & 61.0  & 49.8  & 23.0  & \underline{30.0}  & 18.6  & 12.1  & 59.4  & 86.9  & 36.5  & 42.0  & 29.8  & 21.7 \\
    Qwen3-8B & 44.7  & 26.1  & \textbf{63.2} & 52.9  & 19.6  & 26.9  & 23.9  & \underline{16.2}  & \underline{70.5}  & 88.0  & 36.2  & \underline{47.4}  & 40.9  & 44.9 \\
    GLM-4-9B-Chat-1M & 43.8  & \underline{26.7}  & \underline{63.0}  & 53.6  & 22.8  & 27.6  & 21.0  & 12.2  & 61.5  & \underline{90.1}  & \underline{39.2}  & 28.7  & 55.6  & \textbf{59.5} \\
    \midrule
    \multicolumn{15}{c}{\textbf{Visual-based Compression (3-4x)}} \\
    \midrule
    Glyph & 40.6  & \textbf{28.5} & 37.2  & 45.9  & 19.8  & 25.5  & 21.5  & 12.4  & \textbf{82.6} & 88.5  & 32.5  & 44.4  & \textbf{60.8} & 48.9 \\
    \midrule
    \multicolumn{15}{c}{\textbf{Text-based Compression (3-4x)}} \\
    \midrule
    \rowcolor{gray!20}
    \textbf{RAGP(Ours)} & \textbf{46.7} & 23.9  & 48.9  & \underline{54.9}  & \underline{23.1}  & 25.4  & \textbf{25.1} & \textbf{17.2} & 70.0  & \textbf{91.8} & \textbf{41.5} & \textbf{53.5} & \underline{58.0}  & \underline{58.8} \\
    \bottomrule
    \end{tabular}%
  \label{tab:main2}%
  }
\end{table*}%

\paragraph{Importance Aggregation and Graph Pruning.}
During traversal, each visit to a fine-grained node \( v \in V^{(0)} \)
increments a counter \( c(v) \).
After executing multiple independent walks, we estimate node importance as
\vspace{-1mm}
\begin{equation}
I(v) = \frac{c(v)}{\sum_{v' \in V^{(0)}} c(v')}.   
\end{equation}
Prompt compression is formulated as a graph pruning problem:
we select a subset \( V^\star \subseteq V^{(0)} \) that maximizes
the total retained importance
\vspace{-1mm}
\begin{equation}
    \max_{V^\star} \sum_{v \in V^\star} I(v),\quad\text{s.t. } |V^\star| \le B,
\end{equation}
where \( B \) is a budget determined by the target compression ratio.
The compressed prompt is obtained by projecting nodes in \(V^\star\)
back to their original textual order. Nodes with higher visit counts are prioritized, ensuring that essential semantic information is retained.

\section{Experiments}

\subsection{Experimental Setup}
\textbf{Datasets and Tasks.} We evaluate RAGP on \textbf{LongBench}~\cite{bai2023longbench}, a bilingual benchmark for long-context understanding across tasks. The benchmark includes single-document QA (\textbf{SingleDoc}), such as Qasper (QP) and NarrativeQA (NQA); multi-document QA (\textbf{MultiDoc}); summarization tasks (\textbf{Summ.}), such as QMSum (QSUM) and GovReport (GovRep); few-shot learning (\textbf{FewShot}), e.g., TREC and TriviaQA (TriQA); synthetic reasoning tasks (\textbf{Synth.}); and code-related tasks (\textbf{Code}), including RopoBench-P (RB) and LCC. Evaluation metrics reported in this paper are computed on the LongBench test sets. 
For the additional comparison with PartPrompt~\cite{mao2025parse}, we also evaluate RAGP on BBCNews~\cite{bbc_2024} and Truncated arXiv~\cite{arxiv_2024}, following the dataset and task settings of PartPrompt and reporting BLEU and ROUGE scores under different token constraints.

\textbf{Baselines and Settings.} We evaluate RAGP against a comprehensive set of baselines, grouped into three categories. 
(i) \textbf{Retrieval-based methods} (e.g., BM25~\cite{robertson2009probabilistic}, SBERT~\cite{reimers2019sentence}, OpenAI embeddings~\cite{openai_embeddings}) select relevant sentences or paragraphs under compression constraints. 
(ii) \textbf{Text-based compression methods} (e.g., LongLLMLingua ranker~\cite{jiang2023longllmlingua}, Selective-Context~\cite{li2023unlocking}, LLMLingua~\cite{jiang2023llmlingua}, LLMLingua-2~\cite{pan2024llmlingua}, CPC~\cite{liskavets2025prompt}, EFPC~\cite{cao2025efpc}, GPT-C~\cite{liu2025gpt}, PartPrompt~\cite{mao2025parse}) typically follow coarse-to-fine, structure-guided, or model-assisted strategies to preserve task-relevant content under compression budgets.
(iii) \textbf{Large-model and multimodal baselines} (e.g., LLaMA-3.1-8B-Instruct~\cite{llama3_1_8b_instruct}, Mixtral-8x7B-Instruct-v0.1~\cite{mixtral_8x7b_instruct_v0_1}, Qwen2.5-7B-Instruct~\cite{qwen2_5_7b_instruct}, Mistral-7B-Instruct-v0.3~\cite{mistral_7b_instruct_v0_3}, Qwen2.5-7B-Instruct-1M~\cite{qwen2_5_7b_instruct_1m}, Qwen3-8B~\cite{qwen3_8b}, GLM-4-9B-Chat-1M~\cite{glm4_9b_chat_1m}, Glyph~\cite{cheng2025glyph}
) include strong full-context models and the vision-based text compression Glyph method.
RAGP experiments were conducted on NVIDIA A100 GPUs.
Unless otherwise specified in the corresponding experiment, all methods are evaluated under the same task split, token budget, prompt format, downstream model, and decoding parameters within each comparison.
For the stochastic Lévy-walk procedure, we fix the random seed to 2 in the main experiments.

\subsection{Results}
\label{sec:Results}

\noindent\textbf{Main Results (Table~\ref{tab:main1}).}
We evaluate RAGP on LongBench using GPT-3.5-Turbo under strict token constraints.
LongBench is a challenging setting for prompt compression because many samples contain very long contexts, and task-relevant evidence may be distributed across distant parts of the input. Under a 2,000- or 3,000-token budget, the compressor must remove substantial content while preserving enough evidence for downstream reasoning.
Our method achieves the highest AVG score of $49.3$ under a $3,000$-token budget, surpassing the strongest baseline LongLLMLingua ($48.8$) while using fewer retained tokens and achieving a higher compression ratio ($4\times$ vs $3\times$).
RAGP performs particularly well on structure-dependent tasks: in Single-Doc QA, it scores $43.6$ ($+2.9$ over LongLLMLingua), and in Code tasks, it reaches $58.4$ ($+3.2$).
Even under a tighter $2,000$-token constraint, it maintains a competitive AVG score of $48.1$.
Therefore, the improvement should be interpreted together with token reduction: RAGP preserves comparable or better task performance while retaining fewer tokens under strict compression budgets.
These results demonstrate that our structured compression effectively reduces redundancy while preserving essential semantic structure, mitigating the ``lost-in-the-middle''~\cite{liu2024lost} issue.
Overall, the results highlight the robustness of RAGP across different task categories and compression budgets.

\noindent\textbf{Cross-Dataset Comparison with Structure-Guided Compression Methods (Table~\ref{tab:PartPrompt}).}
To further evaluate the generalizability of structure-guided prompt compression beyond LongBench, we compare RAGP with PartPrompt, a representative parse-tree-guided compression method, on BBCNews and Truncated arXiv under $20$\%, $30$\%, and $50$\% token constraints.
All results are obtained using Mixtral-8x7B-Instruct-v0.1 as the downstream generation model.
RAGP achieves the best BLEU and ROUGE-1 scores in most settings, indicating that multiplex-graph pruning is effective at preserving globally salient content after compression.
On BBCNews, RAGP obtains the highest BLEU, ROUGE-1, and ROUGE-2 scores under the $20$\% and $30$\% constraints, improving BLEU over PartPrompt by $8.84$ and $6.06$ points, respectively.
On Truncated arXiv, RAGP consistently achieves the best BLEU and ROUGE-1 scores across all compression budgets, suggesting better preservation of salient information in long scientific documents.
PartPrompt achieves higher ROUGE-L and several ROUGE-2 scores under larger token budgets, which suggests that parse-tree-guided pruning is effective at maintaining local syntactic continuity.
Overall, these results show that different structure-guided compression strategies have complementary strengths: parse-tree-based pruning better preserves local phrase-level continuity, whereas multiplex-graph pruning better captures global semantic dependencies and salient content.

\noindent\textbf{Comparison with Full-Context LLMs (Table~\ref{tab:main2}).}
Table~\ref{tab:main2} compares RAGP, evaluated with GPT-3.5-Turbo, against full-context LLMs and the vision-based text compression method \emph{Glyph}.
RAGP consistently achieves first- or second-best performance overall across sub-datasets, often surpassing models using uncompressed prompts. Notably, under the same backbone, RAGP maintains performance on par with the full-context model and even slightly outperforms it in several cases, despite substantial compression.
On the QP dataset, RAGP achieves $46.7$, outperforming Qwen2.5-7B-Instruct-1M ($45.3$) and GLM-4-9B-Chat-1M ($43.8$).
In the few-shot TriQA task, it achieves $91.8$, surpassing the strongest full-context model by $1.7$ points.
Compared to \emph{Glyph}, our text-domain approach shows clear advantages: in QA Zh and QA En, RAGP outperforms by $11.7$ and $9.0$ points respectively; in the code-related LCC task, our method ($58.8$) leads \emph{Glyph} ($48.9$) by nearly $10$ points, demonstrating that retaining raw textual tokens is critical for precision-heavy reasoning.

\begin{table}[htbp]
  \centering
  \small
  \caption{Cross-downstream-LLM generalization on the MultiFieldQA-en subtask. We report QA-F1 scores using Qwen2.5-7B-Instruct, LLaMA-3.1-8B-Instruct, and Mistral-7B-Instruct-v0.3 as downstream generation models.}
    \begin{tabular}{c|ccc}
    \toprule
    Methods & Qwen2.5 & LLaMA3.1 & Mistral-v0.3 \\
    \midrule
    LLMLingua & 32.34 & 32.34 & 32.52 \\
    LLMLingua2 & 37.82 & 37.00 & 40.83 \\
    LongLLMLingua & 32.29 & 31.91 & 32.60 \\
    \textbf{RAGP(Ours)} & \textbf{42.30} & \textbf{43.42} & \textbf{47.40} \\
    \bottomrule
    \end{tabular}%
  \label{tab:diffent_model}%
\end{table}%

\begin{table}[t]
\centering
\caption{Component-level preprocessing latency of RAGP on MultiFieldQA-en under the 4,000-to-3,000 token setting.}
\label{tab:component_latency}
\begin{tabular}{lcc}
\toprule
Component & Avg. time & Proportion \\
\midrule
Graph construction & 11.86s & 45.60\% \\
Lévy walk pruning & 14.09s & 54.20\% \\
Total preprocessing & 25.98s & 100.00\% \\
\bottomrule
\end{tabular}
\end{table}

\vspace{-1.2mm}
\subsection{Generalization, Efficiency, and Scalability Analysis}

\noindent\textbf{Cross-downstream-LLM Generalization.}
We further evaluate the generality of RAGP across different downstream generators. Specifically, we conduct a cross-downstream-LLM evaluation on the MultiFieldQA-en subtask using Qwen2.5-7B-Instruct, LLaMA-3.1-8B-Instruct, and Mistral-7B-Instruct-v0.3.

As shown in Table~\ref{tab:diffent_model}, RAGP consistently outperforms all compared compression baselines across the three downstream LLMs.
Specifically, RAGP achieves QA-F1 scores of $42.30$, $43.42$, and $47.40$ with Qwen2.5-7B-Instruct, LLaMA-3.1-8B-Instruct, and Mistral-7B-Instruct-v0.3, respectively.
Compared with the strongest baseline in each setting, RAGP improves QA-F1 by $4.48$, $6.42$, and $6.57$ points, respectively.
These results indicate that the proposed graph-based compression strategy is not tied to a particular downstream model, but generalizes well across different instruction-tuned LLMs.

\noindent\textbf{Efficiency and Cost Trade-off.}
We next analyze efficiency from two complementary perspectives: the preprocessing overhead introduced by RAGP and the downstream inference savings obtained after compression.

Let $K$ denote the number of retained sentences after query-relevance filtering, $d$ the sentence embedding dimension, $N$ the number of semantic-unit nodes, $E_f$ and $E_s$ the retained fine-grained and sentence-level edges, $W$ the number of Lévy walks, and $T$ the walk length. The fine-grained graph construction is dominated by Transformer attention extraction, with $\mathcal{O}(\sum_i L_i^2)$ attention-related cost over retained sentences. The sentence-level semantic graph requires all-pair cosine similarity over $K$ retained sentence embeddings, resulting in $\mathcal{O}(K^2 d)$ time and $\mathcal{O}(K^2)$ temporary memory before sparsification. After retaining the top-$\delta$ weighted edges, sparse graph storage is reduced to $\mathcal{O}(E_f + E_s)$. The Lévy walk stage performs $W$ walks of length $T$, leading to $\mathcal{O}(WT)$ time complexity, and the overall memory usage is $\mathcal{O}(N + K + E_f + E_s)$.

\begin{table}[t]
\centering
\small
\caption{Comparison of original and compressed prompts on MultiFieldQA using LLaMA-3.1-8B-Instruct.}
\begin{tabular}{lccc}
\toprule
\textbf{Metric} & Original & Compressed & Gain \\ 
\midrule
Input Tokens (avg.)     & 6956  & 2830 & \textbf{--59.3\%} \\
\midrule
Inference Latency (s)   & 13.83 & 13.07 & \textbf{--5.5\%} \\

\midrule
QA F1 Score (\%)           & 7.11  & 9.35 & \textbf{+2.24} \\
\bottomrule
\end{tabular}
\label{tab:compression_latency}
\end{table}

\vspace{1em} 
\begin{center}
    \begin{minipage}{0.48\textwidth}
        \centering
        \includegraphics[width=0.9\linewidth]{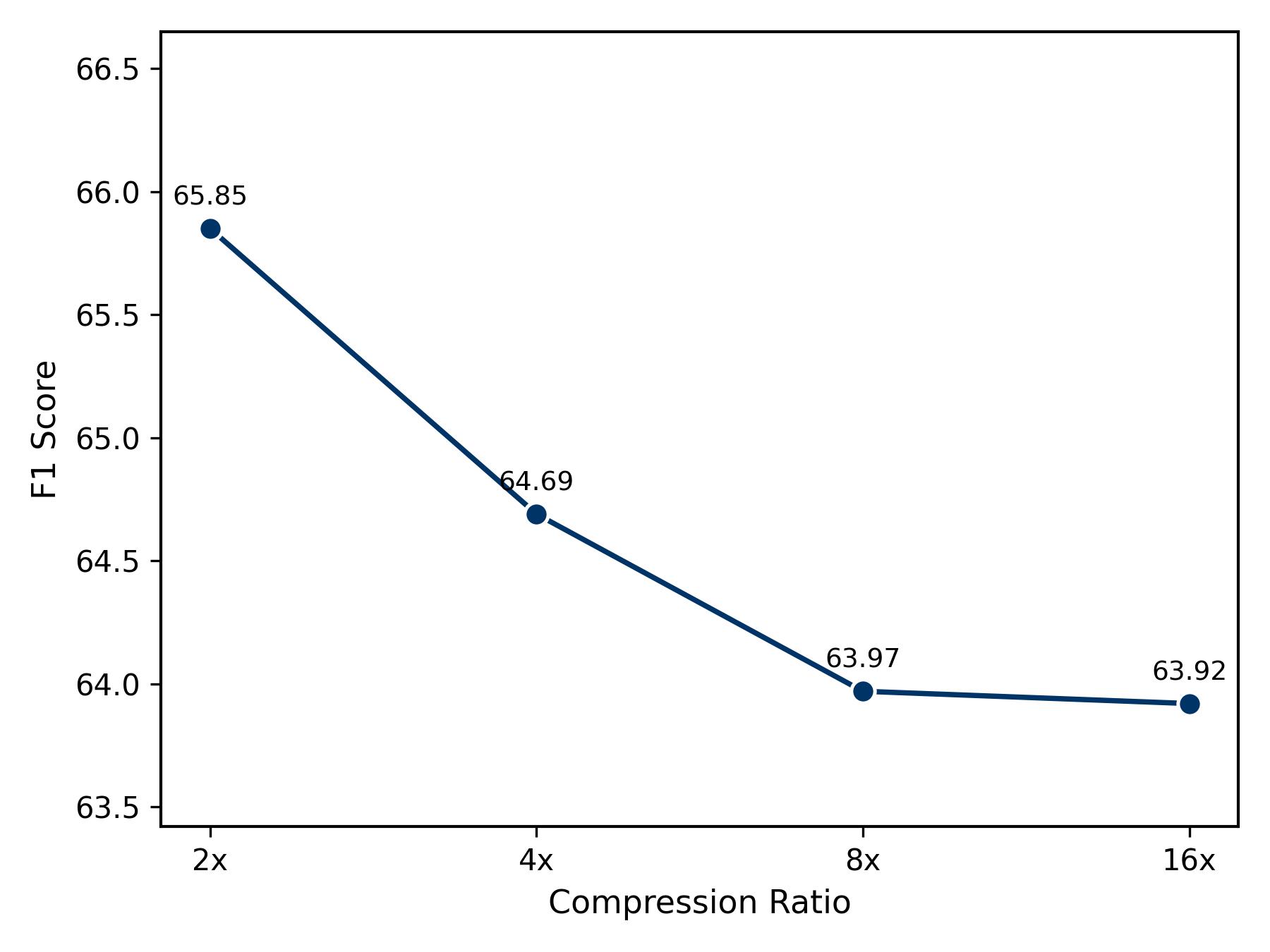}
        \captionof{figure}{F1 performance under different compression ratios. Performance decreases monotonically as the compression ratio increases, which is expected under more aggressive compression. This trend highlights the tradeoff between input compression and task performance.}
        \label{fig:ratio}
    \end{minipage}
\end{center}
\vspace{1em}

The empirical runtime distribution is consistent with the above complexity analysis.
As shown in Table~\ref{tab:component_latency}, graph construction takes 11.86 s on average, accounting for 45.60\% of preprocessing time, while Lévy-walk pruning takes 14.09 s, accounting for 54.20\%.
This confirms that the main preprocessing overhead comes from constructing and traversing the multiplex graph.

After preprocessing, we evaluate the downstream inference behavior of the compressed prompts.
Using LLaMA-3.1-8B-Instruct on MultiFieldQA-en under a controlled local setting, Table~\ref{tab:compression_latency} shows that RAGP reduces the average number of input tokens from $6,956$ to $2,830$ (a $59.3\%$ reduction), decreasing downstream inference latency from $13.83$s to $13.07$s.
Meanwhile, QA-F1 improves from $7.11$ to $9.35$.
These results indicate that removing redundant context can shorten the downstream input while preserving, and in this case improving, answer quality.

Beyond latency, token reduction can also reduce monetary cost in API-based deployment.
To isolate this effect, we conduct a accuracy--cost case study on MultiFieldQA-en using the same downstream API model, GPT-5.2, for both the original and compressed prompts, and compute cost under a unified API pricing scheme (Table~\ref{tab:final_cost}).
This experiment is intended to evaluate the cost impact of token reduction under a fixed API setting, rather than to compare with the GPT-3.5-Turbo main results or the LLaMA-3.1 local efficiency study.
As shown in Table~\ref{tab:final_cost}, RAGP maintains comparable F1 performance to the full-context prompt while reducing input tokens by $19.6\%$, leading to an $18.3\%$ reduction in inference cost.
These results suggest that prompt compression can reduce deployment cost in API-based settings, especially for large-scale or budget-constrained applications.

We further examine whether stronger compression harms task performance.
As shown in Figure~\ref{fig:ratio}, when the compression ratio increases from $2\times$ to $16\times$, F1 gradually decreases from 65.85 to 63.92.
The decline is expected under more aggressive compression, but the performance remains relatively stable at higher compression ratios, indicating that RAGP preserves core task-relevant information while removing redundancy.

Finally, Table~\ref{tab:end2end_efficiency} provides an end-to-end efficiency case study by comparing RAGP with lightweight compressors and the training-free Selective-Context baseline under the same LLaMA-3.1-8B-Instruct setting.
The results show a clear trade-off between preprocessing cost and compressed-prompt quality.
LLMLingua, LLMLingua-2, LongLLMLingua, and Selective-Context require lower preprocessing time, while RAGP achieves the best QA-F1 score (21.88) and the lowest downstream inference latency (1.89s), with fewer input tokens than LLMLingua, LongLLMLingua, and Selective-Context.
This suggests that RAGP's graph-based preprocessing improves the informativeness of the compressed prompt rather than merely reducing token length.
Therefore, RAGP is more suitable for offline preprocessing, reusable compressed contexts, and batch RAG scenarios where preprocessing cost can be amortized, while lightweight compressors may be preferable for latency-critical one-shot compression.

\begin{table}[htbp]
  \centering
  \small
  \caption{Accuracy vs. Cost. Pricing: Input \$1.75, Output \$14.00 per 1M tokens.}
    \begin{tabular}{lcccc}
    \toprule
    \textbf{Method} & \textbf{F1 (\%)} & \textbf{Tokens (In/Out) (k)} & \textbf{Cost} $\downarrow$ \\
    \midrule
    Original & \textbf{56.81} & 543.8 / 4.38 & \$1.013 \\
    \textbf{Ours} & 56.57 & \textbf{437.3} / 4.50 & \textbf{\$0.828} \\
    \midrule
    \textit{$\Delta$ (abs)} & \textit{-0.24} & \textit{-106.5 / +0.12} & \textit{-\$0.185} \\
    \textit{$\Delta$ (\%)} & \textit{(-0.4\%)} & \textit{(\textbf{-19.6\%} / +2.7\%)} & \textit{(\textbf{-18.3\%})} \\
    \bottomrule
    \end{tabular}
  \label{tab:final_cost}
\end{table}

\begin{table}[htbp]
  \centering
  \small
  \setlength{\tabcolsep}{4pt}
  \caption{End-to-end efficiency case study on MultiFieldQA-en using LLaMA-3.1-8B-Instruct as the downstream model. We report compression time, downstream inference time, average input tokens, and QA-F1 under the same evaluation setting.}
    \begin{tabular}{c|cccc}
    \toprule
    \textbf{Method} & \textbf{Input Tokens} & \textbf{Prep.} & \textbf{Infer.} & \textbf{F1} \\
    \midrule
    LLMLingua & 3125.98 & 0.66  & 1.96  & 16.08 \\
    LLMLingua2 & 2882.68 & 0.47  & 1.93  & 18.31 \\
    LongLLMLingua & 2991.47 & 0.47  & 1.91  & 15.09 \\
    Selective-Context & 3143.18 & 8.09  & 1.96  & 19.71 \\
    RAGP(Ours) & 2904.45 & 25.98 & 1.89  & 21.88 \\
    \midrule
    \end{tabular}%
  \label{tab:end2end_efficiency}%
\end{table}%

\noindent\textbf{Scalability and Stability.}
The complexity analysis above also clarifies the scalability boundary of RAGP.
The main bottleneck is sentence-level graph construction, whose all-pair similarity computation grows quadratically with the number of retained sentences.
Although query-relevance filtering reduces the number of sentences before graph construction and top-$\delta$ sparsification stores only the strongest edges, extremely long contexts containing hundreds of thousands or millions of tokens would require additional acceleration strategies.
Possible extensions include hierarchical chunking, approximate nearest-neighbor sentence graph construction, streaming graph updates, and multi-stage pruning.

Since RAGP uses Lévy walks for graph pruning, we also evaluate its stability under different random seeds.
As shown in Table~\ref{tab:multiseed_stability}, under a 3,000-token budget on MultiFieldQA-en, RAGP shows stable compression behavior across seeds 1, 2, and 3.
The average compression time is $14.77 \pm 0.11$s, and the average compressed length is $2763.50 \pm 1.66$ tokens.
These small standard deviations indicate that the random-walk procedure does not introduce substantial variance in the compression process under the same experimental setting.

\begin{table}[htbp]
  \centering
  \small
  \caption{Multi-seed stability analysis of RAGP on MultiFieldQA-en under a 3,000-token budget.}
  \begin{tabular}{c|cc}
    \toprule
    \textbf{Seed} & \textbf{Compression Time (s)} & \textbf{Compressed Tokens} \\
    \midrule
    1 & 14.83 & 2761.17 \\
    2 & 14.62 & 2764.56 \\
    3 & 14.88 & 2764.79 \\
    \midrule
    Mean $\pm$ Std. & 14.77 $\pm$ 0.11 & 2763.50 $\pm$ 1.66 \\
    \bottomrule
  \end{tabular}
  \label{tab:multiseed_stability}
\end{table}

\subsection{Ablation Study}
\label{sec:Ablation Study}

We conduct ablation studies on the Single-Doc QA task of LongBench using GPT-3.5-Turbo to assess the impact of key design choices, with results summarized in Table~\ref{tab:ablation_full}. Specifically, we examine three aspects: (i) the M2 graph refinement module and different graph traversal strategies, (ii) the sparsity threshold $\delta$ that controls edge pruning, and (iii) the Lévy exponent $\mu$ governing the balance between local and global transitions.

\noindent\textbf{Module Ablation and Graph Traversal Comparison.}
Before constructing the multiplex graph, we perform semantic pre-filtering to remove obviously irrelevant sentences based on similarity scores. This step, referred to as M1 in the ablation study, reduces graph size and improves efficiency.
We then compare three graph traversal strategies on top of M1: standard Random walk, PageRank, and our Lévy walk-based refinement (M2).
As shown in Table~\ref{tab:ablation_full}, Lévy walk (M1 + M2) consistently outperforms Random walk ($+2.26$) and PageRank ($+1.74$), demonstrating its effectiveness at selecting informative semantic units while reducing redundancy.
Adding the full graph refinement module M2 on top of M1 further improves the F1 score from $52.40$\% to $54.91$\%.
Since constructing the graph without semantic pre-filtering leads to substantially larger graph structures and significantly higher computational cost, we additionally evaluate standalone M2 under a smaller experimental setting on the Qasper dataset (GPT-3.5-Turbo, 3,000-token limit). As shown in Table~\ref{tab:qasper_ablation}, M1 alone achieves 44.08 F1, while M2 alone achieves 42.95 F1. Combining M1 and M2 further improves performance to 46.72 F1, indicating that semantic pre-filtering and graph-based refinement play complementary roles.
Overall, these results further validate the effectiveness of combining semantic pre-filtering with Lévy Walk-based graph refinement.

\begin{table}[t]
\centering
\caption{Ablation study and hyperparameter sensitivity of RAGP under a 3000-token budget.
\textbf{M1}: Semantic Pre-filtering;
\textbf{M2}: Multiplex Graph with Lévy Walk-based refinement.
Gain ($\Delta$) is computed relative to the M1-only baseline.}
\label{tab:ablation_full}
\small
\begin{tabular}{l c c c}
\toprule
\textbf{Setting} & \textbf{Configuration} & \textbf{F1 (\%)} & $\boldsymbol{\Delta}$  \\
\midrule
Baseline        & M1 only                & 52.40 & -- \\
Random          & M1 + Random            &52.65  & +0.25\\
PageRank        & M1 + PageRank            &53.17  & +0.77\\
Ours            & M1 + M2                & \textbf{54.91} & \textbf{+2.51} \\
\midrule
Graph sparsity  & $\delta = 0$ (Dense graph)            & 53.24 & +0.84 \\
                & $\delta = 30$ (Optimal)          & \textbf{54.91} & \textbf{+2.51} \\
                & $\delta = 50$ (Sparse graph)          & 53.45 & +1.05 \\

\midrule
Lévy parameter  & $\mu = 1.5$ (Global)            & 52.56 & +0.16 \\
                & $\mu = 2.5$  (Balanced)           & \textbf{54.91} & \textbf{+2.51} \\
                & $\mu = 3.5$ (Local)            & 53.67 & +1.27 \\
\bottomrule
\end{tabular}
\label{tab:ablation}%
\end{table}

\begin{table}[t]
\centering
\small
\caption{Additional ablation study on Qasper.}
\label{tab:qasper_ablation}
\begin{tabular}{lc}
\toprule
\textbf{Method} & \textbf{F1} \\
\midrule
M1 only (Semantic Pre-filtering) & 44.08 \\
M2 only (Lévy Walk) & 42.95 \\
M1 + M2 & \textbf{46.72} \\
\bottomrule
\end{tabular}
\end{table}

\noindent\textbf{Effect of Graph Sparsity Threshold ($\delta$).}
The graph sparsity threshold $\delta$ controls the trade-off between preserving graph connectivity and removing weak or noisy edges.
We fix the Lévy exponent at $\mu=2.5$ when analyzing $\delta$.
A dense graph ($\delta=0\%$) retains excessive weak edges, introducing noise that slightly reduces performance to $53.24\%$.
In contrast, overly aggressive pruning ($\delta=50\%$) removes some important semantic connections, lowering the F1 to $53.45\%$.
The chosen threshold $\delta=30\%$ strikes a balance, preserving the critical structural information for effective traversal and achieving the best score of $54.91\%$.
Thus, we use $\delta=30\%$ as an empirically validated default that reflects this connectivity--noise trade-off, rather than as a theoretically optimal constant.

\noindent\textbf{Sensitivity to Lévy Exponent ($\mu$).}
When studying the effect of $\mu$, we fix the graph sparsity at $\delta=30\%$.
We test values both within and outside the grounded regime ($\mu > 2$) to understand boundary behavior.
Smaller values (e.g., $\mu=1.5$, outside the $\mu > 2$ regime) induce long-range jumps with infinite expected segment length, leading to overly global exploration and yielding $52.56\%$. Larger values bias the walk toward short-range transitions; when $\mu=3.5$, the walk degenerates toward local exploration, slightly reducing performance to $53.67\%$. The intermediate setting $\mu=2.5$, which lies within the theoretically valid regime, achieves the best balance between local and global traversal and attains the highest F1 of $54.91\%$.
This empirically validates our theoretical analysis in Proposition~\ref{prop:coverage}: with $\mu = 2.5$, the expected segment length $\mathbb{E}[L] = (\mu-1)/(\mu-2) = 3$, providing an effective local-global transition ratio for the heterogeneous multiplex structure.

\subsection{Error Analysis}
\label{sec:Error Analysis}
We conduct an error analysis across different tasks in the LongBench benchmark to better understand the limitations of our approach.
We find that error patterns vary notably across task types due to their distinct structural characteristics.
For example, in code-related tasks, preserving structural elements such as function definitions, control flow, and variable dependencies is crucial; sentence-level importance estimation alone may remove syntactically essential components, leading to incorrect execution or reasoning.
In contrast, for narrative or multi-document question answering tasks, errors are more likely caused by the removal of low-frequency but query-relevant details, such as specific entities or fine-grained relations.
These observations suggest that incorporating task-aware or structure-aware constraints into the compression process could further improve robustness across diverse long-context tasks.

\section{Conclusion}

In this work, we present RAGP, a structured prompt compression framework for long-context LLM inputs. By modeling the input text as a multiplex graph and performing redundancy-aware pruning guided by Lévy walks, our approach effectively identifies and retains the most informative semantic units while removing redundant content. Extensive experiments on the LongBench benchmark demonstrate that RAGP consistently outperforms strong baselines across QA, summarization, few-shot, synthetic reasoning, and code tasks. Ablation studies further confirm the importance of local-global exploration, Lévy flight distribution, and attention-based graph sparsity.

\noindent\textbf{Limitations.} While RAGP performs strongly overall, it has several limitations.
First, multiplex graph construction introduces additional overhead.
Second, summarization performance is slightly below some strong baselines ($22.7$ vs. $27.2$).
Third, hyperparameters (e.g., $\mu$, $\delta$) may require tuning across domains.
Fourth, RAGP depends on the quality of the constructed multiplex graph. The fine-grained graph is derived from Transformer attention scores, while the sentence-level graph is constructed using MiniLM sentence embeddings. If these representations are noisy, domain-mismatched, or weak for a particular type of text, the graph may contain inaccurate edges or miss important dependencies, leading to suboptimal Lévy walks, inaccurate importance estimation, and degraded compression quality. Therefore, RAGP is model-independent in the sense that it does not require training or modifying the downstream LLM, but it is not representation-independent during graph construction. Changing the underlying attention or embedding model may alter the graph topology and the resulting compressed prompt.
Future work includes exploring adaptive, task-specific compression strategies, improving graph robustness through stronger domain-specific embeddings, edge calibration, or graph denoising, and extending the multiplex graph framework to handle multi-modal inputs.
Finally, the current implementation has not been evaluated on hundred-thousand- or million-token contexts, where additional scalability strategies would be required.

\appendix
\section{Notation}
\label{app:notation}
Table~\ref{tab:notation} summarizes the main symbols and notations used throughout the paper.

\section{Proof of Proposition~\ref{prop:coverage}}
\label{app:proof}

We provide a complete proof of Proposition~\ref{prop:coverage}, which characterizes the sentence coverage time for standard random walks versus Lévy walks on multiplex text graphs.

\begin{proposition}[Sentence Coverage, restated]
Let $\mathcal{G}$ be a multiplex graph with $K$ sentences,
average intra-sentence degree $d_{\text{local}}$, and average inter-sentence degree $d_{\text{global}}$.
Define the heterogeneity ratio $\eta = d_{\text{local}}/d_{\text{global}}$.
The expected number of steps to visit all $K$ sentences satisfies:
\begin{align}
T_{\text{RW}} &= \Theta(K \cdot \eta \cdot \ln K), \\
T_{\text{L\'{e}vy}} &= \Theta\left(K \cdot \frac{\mu-1}{\mu-2} \cdot \ln K\right) \quad \text{for } \mu > 2.
\end{align}
Thus, Lévy walk achieves faster coverage when $\eta > (\mu-1)/(\mu-2)$, as the $\ln K$ factors cancel in the speedup ratio.
\end{proposition}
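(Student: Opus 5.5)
The plan is to make explicit the mean-field idealization behind the statement, then decompose each walk into \emph{sojourns} (maximal runs of steps inside one sentence). From there, the two bounds follow from a coupon-collector estimate on the number of sojourns combined with Wald's identity for their total length.

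\textbf{Modeling assumptions.} I would fix three assumptions.
\begin{itemize}
\item[(A1)] \emph{Near-uniform jump targets.} Whenever the walker leaves a sentence, the next sentence is drawn from a distribution $q$ on $V^{(1)}$ with $c/K \le q(s') \le C/K$ for constants $c,C$. This draw is independent of the past given the current sentence, which is a bounded-ratio, well-mixed assumption on $G^{(1)}$.
\item[(A2)] \emph{Homogeneous exit probability for the plain random walk.} For the standard random walk on the union of intra-sentence edges and inter-sentence edges (the latter lifted through $\pi$), the probability of leaving the current sentence at each step is $p = d_{\text{global}}/(d_{\text{local}}+d_{\text{global}}) = 1/(\eta+1)$.
\item[(A3)] \emph{Segment-length distribution.} Lévy segment lengths are i.i.d. with the law of Algorithm~\ref{alg:levy_walk}, independent of the sentence process.
\end{itemize}
Under these assumptions the sequence of sentences entered at the start of each sojourn is a Markov chain with near-uniform transitions. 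The sojourn lengths form an i.i.d. sequence independent of that chain.

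\textbf{Counting sojourns.} Let $M$ be the number of sojourns needed to have visited all $K$ sentences. This is a coupon collector with probabilities in $[c/K, C/K]$. After $j$ distinct sentences are collected, the success probability per draw lies in $[c(K-j)/K,\, C(K-j)/K]$. Summing the geometric waiting times gives
\[
\mathbb{E}[M] = \Theta(K H_K) = \Theta(K\ln K).
\]
Because $M$ is a stopping time for the sentence sequence and the sojourn lengths $S_i$ are independent of it, Wald's identity yields $\mathbb{E}[T] = \mathbb{E}[M]\,\mathbb{E}[S]$, up to an $O(\mathbb{E}[S])$ correction for the final partial sojourn. So it remains to compute $\mathbb{E}[S]$ for each walk.

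\textbf{Mean sojourn length.}
\begin{itemize}
\item \emph{Random walk.} By (A2), $S$ is geometric with mean $1/p = \eta+1 = \Theta(\eta)$ for $\eta \ge 1$. This gives $T_{\text{RW}} = \Theta(K\eta\ln K)$.
\item \emph{Lévy walk.} A segment of $L$ local moves followed by one jump has $S = L+1$. With $L=\lfloor (1-r)^{-1/(\mu-1)}\rfloor$, we have $\mathbb{P}(L \ge l) = l^{-(\mu-1)}$ for $l\ge 1$, so $\mathbb{E}[L] = \zeta(\mu-1)$. The integral comparison $\zeta(s) \in [1/(s-1),\, s/(s-1)]$ with $s = \mu-1 > 1$ gives
\[
\mathbb{E}[S] \in \left[\frac{\mu-1}{\mu-2},\ 1+\frac{\mu-1}{\mu-2}\right],
\]
which is $\Theta\bigl((\mu-1)/(\mu-2)\bigr)$ for $\mu>2$. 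This gives $T_{\text{Lévy}} = \Theta\bigl(K\frac{\mu-1}{\mu-2}\ln K\bigr)$.
\end{itemize}
Dividing the two bounds, the common factor $K\ln K$ cancels. The speedup ratio is $\Theta\bigl(\eta(\mu-2)/(\mu-1)\bigr)$, and it exceeds one exactly in the stated regime up to constants.

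\textbf{Expected main obstacle.} The hardest step is justifying (A1) and (A2) from the graph itself. The coupon-collector lower bound needs that no sentence is reachable only with probability $o(1/K)$; a star-like or bottlenecked $G^{(1)}$ would break it. The random-walk exit rate is also only an average over nodes, not a per-step constant. My first approach would be to state the proposition under an explicit bounded-degree-ratio hypothesis on both layers, i.e. $\max/\min$ of the local and global degrees bounded. Under that hypothesis the per-node exit probabilities lie in $[\Theta(1/\eta),\Theta(1/\eta)]$, so the geometric law can be replaced by stochastic domination from both sides. The jump distribution, being row-normalized cosine weights over the top-$30\%$ edges, then has bounded ratios. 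The resulting $\Theta$-constants depend on these ratios, and the constant $\mu-1$ from the $\zeta$ bounds stays bounded only for $\mu$ in a compact subinterval of $(2,\infty)$, which I would state explicitly.
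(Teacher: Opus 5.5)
Your argument is correct under (A1)--(A3) and follows the paper's proof. It is a coupon-collector count of inter-sentence moves multiplied by the mean sojourn length, with (A1) standing in for the paper's uniform-escape assumption (Lemma~\ref{lem:rw_coverage}) and its bound $p_{\min}=\Omega(w_{\min}/(K w_{\max}))$ (Lemma~\ref{lem:levy_coverage}). Your version is tighter in three places. Wald's identity justifies multiplying the expectations. $\mathbb{E}[L]=\zeta(\mu-1)$ is the exact mean for the algorithm's floor-of-Pareto sampler, whereas the paper computes $\zeta(\mu-1)/\zeta(\mu)$ for a pmf $\propto l^{-\mu}$ and then switches to a continuous approximation. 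And your two-sided bounds on $q$ give a genuine $\Theta$, where Lemma~\ref{lem:levy_coverage} only proves the upper bound.

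One caution on your closing paragraph, which applies equally to Lemma~\ref{lem:levy_coverage}: bounded degree and weight ratios make jumps near-uniform only over the out-neighbors of the current sentence, not over all of $V^{(1)}$. Two topic clusters whose top-$30\%$ similarity edges all stay inside their own cluster satisfy both hypotheses, yet neither walk ever leaves its starting cluster. So the statement itself needs an expansion or fast-mixing hypothesis on $G^{(1)}$, and under it (A1) is best replaced by a Markov-chain cover-time bound such as Matthews' rather than an i.i.d.\ coupon collector.
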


\subsection{Model Setup}

Consider a multiplex graph $\mathcal{G} = \{G^{(0)}, G^{(1)}\}$ with the following structure:
\begin{itemize}
    \item \textbf{Fine-grained layer} $G^{(0)}$: Contains $K$ disjoint dense subgraphs, one per sentence. Each subgraph $G^{(0)}_s$ for sentence $s$ has $n_s$ nodes with average internal degree $d_{\text{local}}$.
    \item \textbf{Coarse-grained layer} $G^{(1)}$: A sparse graph over $K$ sentence nodes, with average degree $d_{\text{global}} \ll d_{\text{local}}$.
    \item \textbf{Coupling}: The mapping $\pi: V^{(0)} \to V^{(1)}$ associates each fine-grained node with its containing sentence.
\end{itemize}

We analyze two traversal strategies:
\begin{enumerate}
    \item \textbf{Standard Random Walk}: At each step, transitions among neighbors in an \emph{augmented} fine-grained graph, where inter-sentence edges are added with weights proportional to $w^{(1)}(s, s')$ from the coarse-grained layer. This models a baseline that has access to global connectivity information but does not exploit the multiplex structure explicitly. The inter-sentence transition probability is $p_{\text{escape}} \approx d_{\text{global}}/(d_{\text{local}} + d_{\text{global}}) \approx 1/\eta$ when $\eta \gg 1$.
    \item \textbf{Lévy Walk}: Alternates between local exploration within sentences and global jumps across sentences, with segment lengths drawn from a power-law distribution $P(L=l) \propto l^{-\mu}$. Unlike standard random walks, Lévy walks perform global jumps at \emph{deterministic} intervals (at the end of each segment), independent of the local graph structure.
\end{enumerate}

\begin{table*}[h]
\centering
\small
\caption{Notation used throughout the paper.}
\begin{tabular}{p{1.5cm} p{10cm}}
\toprule
Symbol & Description \\
\midrule
$\mathcal{D}$ & Input prompt (long text) \\
$K$ & Number of sentences \\
$s_k, s$ & The $k$-th sentence / sentence node in the coarse-grained graph \\
$v$ & Fine-grained node (semantic unit at word level) \\
$n_s, n_k$ & Number of fine-grained nodes in sentence $s$ (or $s_k$) \\
$\mathrm{Tok}(v)$ & Set of subword tokens composing semantic unit $v$ \\
$G^{(0)}$ & Fine-grained graph over semantic units $v$ \\
$G^{(1)}$ & Coarse-grained graph over sentence nodes $s$ \\
$V^{(0)}, V^{(1)}$ & Node sets of fine-grained and coarse-grained layers \\
$E^{(0)}, E^{(1)}$ & Edge sets of fine-grained and coarse-grained layers \\
$V^{(0)}_s$ & Fine-grained nodes within sentence $s$ \\
$G^{(0)}_s$ & Induced fine-grained subgraph within sentence $s$ \\
$\pi$ & Mapping from fine-grained nodes to sentence nodes: $\pi: V^{(0)} \to V^{(1)}$ \\
$w^{(0)}(v, v')$ & Edge weight in fine-grained layer (attention-based) \\
$w^{(1)}(s, s')$ & Edge weight in coarse-grained layer (semantic similarity) \\
$\mathbf{e}_s$ & Sentence embedding vector for sentence $s$ \\
$N$ & Number of independent random walks \\
$T$ & Length of each random walk (number of steps) \\
$L$ & Lévy segment length (steps before global jump) \\
$\mu$ & Lévy exponent controlling step-length distribution \\
$\delta$ & Threshold for edge sparsification \\
$B$ & Compression budget \\
$\eta$ & Heterogeneity ratio: $\eta = d_{\text{local}} / d_{\text{global}}$ \\
$I(v)$ & Importance score of node $v$ \\
$c(v)$ & Visit count of node $v$ during walks \\
\bottomrule
\end{tabular}
\label{tab:notation}
\end{table*}

\subsection{Analysis of Standard Random Walk}

\begin{lemma}[Inter-sentence transition probability]
\label{lem:rw_transition}
For a standard random walk on the fine-grained layer with weak inter-sentence connectivity,
the probability of transitioning from a node in sentence $s$ to a node in a different sentence $s' \neq s$ is:
\[
p_{\text{escape}} = \frac{d_{\text{global}}}{d_{\text{local}} + d_{\text{global}}} \approx \frac{1}{\eta + 1} \approx \frac{1}{\eta}
\]
when $\eta = d_{\text{local}}/d_{\text{global}} \gg 1$.
\end{lemma}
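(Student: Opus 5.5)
This lemma is a direct computation from the transition rule of the augmented random walk in the model setup, so the plan is to write down that rule explicitly and then average it. We work in the augmented fine-grained graph described for the standard random walk. From a node $v$ with $\pi(v)=s$, the walker chooses a neighbor with probability proportional to edge weight. The neighbors of $v$ split into two disjoint groups: intra-sentence neighbors in $G^{(0)}_s$, and inter-sentence neighbors induced by the coarse-grained edges $(s,s')$. The first step is to fix the normalization. Under the simplifying convention implicit in the proposition (comparable edge weights, so that probability mass is governed by edge counts), $v$ has on average $d_{\text{local}}$ intra-sentence edges and $d_{\text{global}}$ inter-sentence edges.

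\textbf{Escape probability.} The one-step escape probability from $v$ is
\[
p_{\text{escape}}(v)=\frac{\deg_{\text{global}}(v)}{\deg_{\text{local}}(v)+\deg_{\text{global}}(v)}.
\]
Replacing the degrees by their averages gives $p_{\text{escape}}=d_{\text{global}}/(d_{\text{local}}+d_{\text{global}})$. Dividing numerator and denominator by $d_{\text{global}}$ yields exactly $1/(\eta+1)$. The final approximation is elementary:
\[
\frac{1}{\eta}-\frac{1}{\eta+1}=\frac{1}{\eta(\eta+1)}=O(\eta^{-2}),
\]
so the relative error is $1/\eta$, which vanishes as $\eta\gg1$.

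\textbf{Main obstacle.} The delicate step is justifying the passage from per-node degrees to the averages $d_{\text{local}}$ and $d_{\text{global}}$, since the map $x\mapsto x/(a+x)$ is not linear. I would handle this in two stages.

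\begin{enumerate}
\item \emph{Regularity assumption.} Assume every node in sentence $s$ inherits the sentence-level global degree, and that local degrees concentrate around $d_{\text{local}}$. Then the stated formula holds exactly.
\item \emph{Heterogeneous degrees.} Average $p_{\text{escape}}(v)$ over the stationary distribution, which is proportional to weighted degree. The stationary-weighted escape rate equals the ratio of total inter-sentence weight to total weight, so the weights cancel and we get precisely $\sum d_{\text{global}}/\sum(d_{\text{local}}+d_{\text{global}})$. This is the average-degree expression.
\end{enumerate}

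The point of the second stage is that it makes the lemma valid in the stationary, on-average sense used later. There, $1/p_{\text{escape}}\approx\eta$ serves as the expected sojourn time per sentence in the coupon-collector argument for $T_{\text{RW}}$.
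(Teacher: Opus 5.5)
Your proposal is correct and follows essentially the paper's route. You split the neighbors of $v$ into intra-sentence weight (about $d_{\text{local}}$) and aggregated inter-sentence weight (about $d_{\text{global}}$), normalize, and simplify to $1/(\eta+1)\approx 1/\eta$, which is exactly the paper's one-line heuristic computation. Your stationary-averaging step for the nonlinearity of $x\mapsto x/(a+x)$ goes beyond the paper and is sound, but it needs a reversible augmented walk (symmetric weights, so the stationary law is proportional to weighted degree). That holds in the appendix's undirected abstraction but not for the directed attention and top-$k$ edges of the actual construction.
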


\begin{proof}
Consider a node $v$ in sentence $s$ with $d_{\text{local}}$ neighbors within $s$ and (on average) $d_{\text{global}}/n_s \cdot n_{s'}$ effective connections to other sentences through the coarse-grained layer.
Aggregating over all external sentences and normalizing:
\[
p_{\text{escape}} = \frac{\sum_{s' \neq s} w_{\text{inter}}(v, s')}{d_{\text{local}} + \sum_{s' \neq s} w_{\text{inter}}(v, s')} \approx \frac{d_{\text{global}}}{d_{\text{local}} + d_{\text{global}}} = \frac{1}{\eta + 1}.
\]
For $\eta \gg 1$, this simplifies to $p_{\text{escape}} \approx 1/\eta$.
\end{proof}

\begin{lemma}[Expected time to visit a new sentence]
\label{lem:rw_waiting}
Starting from a uniformly random node in a sentence $s$, the expected number of steps until the random walk first visits a node in a different sentence is $\Theta(\eta)$.
\end{lemma}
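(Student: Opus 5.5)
The plan is to treat the walk's sentence membership as a sequence of Bernoulli trials and read the waiting time off a geometric distribution. By Lemma~\ref{lem:rw_transition}, from any node $v$ with $\pi(v)=s$ the one-step probability of landing in a sentence $s'\neq s$ is
\[
p_{\text{escape}}=\frac{d_{\text{global}}}{d_{\text{local}}+d_{\text{global}}}=\frac{1}{\eta+1}.
\]
While the walker stays inside $s$ it moves to another node of $s$, so the next step is again an escape attempt with (approximately) the same probability. The first exit time $\tau$ is therefore (approximately) geometric with parameter $p_{\text{escape}}$, which gives $\mathbb{E}[\tau]=1/p_{\text{escape}}=\eta+1=\Theta(\eta)$.

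To make this rigorous rather than heuristic, I will not assume every node has exactly the average escape probability. Instead I will bound $p_{\text{escape}}(v)$ between $c_1/\eta$ and $c_2/\eta$ uniformly over $v\in V^{(0)}_s$, for constants $c_1,c_2$ independent of $K$ and $\eta$. This is a bounded-degree-heterogeneity assumption implicit in the model setup: the local degrees within a sentence are comparable to $d_{\text{local}}$, and the inter-sentence weight attached to each node is comparable to $d_{\text{global}}$.

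Given these bounds, the key step is a domination argument. At each step, conditioned on the past and on still being in $s$, the probability of escaping lies in $[c_1/\eta,\,c_2/\eta]$. Hence $\tau$ is stochastically dominated by a $\mathrm{Geom}(c_1/\eta)$ variable and stochastically dominates a $\mathrm{Geom}(c_2/\eta)$ variable. Formally,
\[
\Pr(\tau>t)\le (1-c_1/\eta)^t \quad\text{and}\quad \Pr(\tau>t)\ge(1-c_2/\eta)^t.
\]
Summing over $t\ge 0$ yields $\eta/c_2\le\mathbb{E}[\tau]\le\eta/c_1$, i.e.\ $\Theta(\eta)$.

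The uniformly random starting node affects the result only by averaging over starting points, and since the bounds hold for every start, the averaged bound follows immediately.

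The main obstacle is justifying the uniform per-node escape bounds. Lemma~\ref{lem:rw_transition} controls only the \emph{average} escape probability, while in the attention graph some nodes (e.g.\ function words) may carry much larger local weight than others. My first attempt will be to state explicitly a regularity assumption that per-node local and inter-sentence weights are within constant factors of their averages, and to work under it. If that is too strong, my fallback is to use the stationary distribution of the walk restricted to $s$. Kac-type reasoning then gives an expected exit time from stationarity of $1/\bar p$, where $\bar p$ is the stationarity-weighted escape probability, which is $\Theta(1/\eta)$ by Lemma~\ref{lem:rw_transition}. Since the local subgraph is dense and mixes in $O(1)$ steps relative to $\eta$, a uniform start differs from stationarity only by a negligible additive term.
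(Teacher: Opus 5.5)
Your proposal is correct and takes essentially the paper's route: the paper's proof treats the exit time from $s$ as geometric with success probability $p_{\text{escape}}\approx 1/\eta$ (from Lemma~\ref{lem:rw_transition}) and reads off $\mathbb{E}[\tau_{\text{escape}}]=1/p_{\text{escape}}=\eta$. Your uniform per-node bounds $c_1/\eta\le p_{\text{escape}}(v)\le c_2/\eta$ and the two-sided geometric domination state explicitly the homogeneity assumption that the paper's argument uses without saying so, and relax it to constant factors; the core argument is unchanged.
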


\begin{proof}
The time to escape sentence $s$ follows a geometric distribution with success probability $p_{\text{escape}} \approx 1/\eta$.
Therefore, the expected escape time is:
\[
\mathbb{E}[\tau_{\text{escape}}] = \frac{1}{p_{\text{escape}}} = \eta.
\]
\end{proof}

\begin{lemma}[Random walk sentence coverage time]
\label{lem:rw_coverage}
The expected number of steps for a standard random walk to visit all $K$ sentences is:
\[
T_{\text{RW}} = \Theta(K \cdot \eta \cdot \ln K).
\]
\end{lemma}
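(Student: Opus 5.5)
The plan is to reduce the walk to a coupon-collector process on the $K$ sentences, using Lemma~\ref{lem:rw_waiting} for the time cost of each sentence-level move.

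\textbf{Step 1: decompose the walk into excursions.} Split the trajectory into excursions. Each excursion is a maximal run of steps spent inside a single sentence. By Lemma~\ref{lem:rw_waiting}, each excursion has expected length $\Theta(\eta)$. This holds uniformly over the starting node if we work in the idealized model of the Model Setup, where the escape probability is $\approx 1/\eta$ at every node. The excursion lengths are i.i.d.\ geometric variables, up to constants.

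\textbf{Step 2: the embedded sentence chain.} The sequence of sentences visited at the ends of excursions is a Markov chain $S_1, S_2, \dots$ on $V^{(1)}$. Its transitions are proportional to $w^{(1)}$. We would assume, as the proposition implicitly does, that this chain mixes fast and is near-uniform, i.e.\ comparable to uniform sampling up to constant factors. Then the number $M$ of excursions needed to visit all $K$ sentences satisfies $\mathbb{E}[M] = \Theta(K\ln K)$ by the coupon collector bound $K H_K$.

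\textbf{Step 3: combine via Wald's identity.} Let $X_i$ be the length of excursion $i$. Then
\[
T_{\text{RW}} = \mathbb{E}\Bigl[\textstyle\sum_{i=1}^{M} X_i\Bigr] = \mathbb{E}[M]\,\mathbb{E}[X_1] = \Theta(K\ln K)\cdot\Theta(\eta).
\]
Wald's identity applies because $M$ is a stopping time with respect to the excursion sequence, and the $X_i$ are independent of the sentence identities. This gives both the upper and the lower bound.

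\textbf{Main obstacle.} The hard part is Step 2: justifying coupon-collector behavior for a walk on a sparse graph of degree $d_{\text{global}}$, rather than for i.i.d.\ uniform draws. In general, cover time on a graph can exceed $K\ln K$. For example, a path of sentences gives cover time $\Theta(K^2)$. I would therefore state explicitly the assumption that $G^{(1)}$ is an expander-like graph with $O(1)$ relaxation time and roughly uniform stationary distribution. Under that assumption, Matthews' bound gives cover time $\Theta(K\ln K)$ for the embedded chain in both directions. The lower bound would come from Matthews' bound applied to a set of sentences with pairwise hitting times of order $K$.
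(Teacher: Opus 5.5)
Your proposal is correct and follows essentially the paper's route. The paper also multiplies the $\Theta(\eta)$ escape time of Lemma~\ref{lem:rw_waiting} by a coupon-collector count of escapes, computing $\mathbb{E}[T_i]=\eta\cdot K/(K-i+1)$ stage by stage and summing to $\eta K H_K$, under the explicit assumption that each escape lands on a uniformly random sentence. Your use of Wald's identity and your expander-plus-Matthews substitute for that uniformity assumption make the same argument more careful and slightly more general, and your path example correctly shows that some assumption on $G^{(1)}$ is unavoidable.
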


\begin{proof}
We apply the coupon collector argument. Let $T_i$ denote the time to visit the $i$-th new sentence after having visited $i-1$ distinct sentences.

When $i-1$ sentences have been visited, the probability that a random escape leads to a new (unvisited) sentence is approximately $(K - i + 1)/K$, assuming uniform distribution over sentences upon escape.

The expected time to visit the $i$-th new sentence consists of:
\begin{enumerate}
    \item Waiting time to escape the current sentence: $\mathbb{E}[\tau_{\text{escape}}] = \eta$ (by Lemma~\ref{lem:rw_waiting})
    \item Expected number of escapes needed to reach a new sentence: $K/(K-i+1)$ (coupon collector)
\end{enumerate}

Therefore:
\[
\mathbb{E}[T_i] = \eta \cdot \frac{K}{K - i + 1}.
\]

The total coverage time is:
\begin{align}
T_{\text{RW}} &= \sum_{i=1}^{K} \mathbb{E}[T_i] = \eta \sum_{i=1}^{K} \frac{K}{K - i + 1} \\
&= \eta \cdot K \sum_{j=1}^{K} \frac{1}{j} = \eta \cdot K \cdot H_K \\
&= \Theta(K \cdot \eta \cdot \ln K),
\end{align}
where $H_K = \sum_{j=1}^K 1/j = \Theta(\ln K)$ is the $K$-th harmonic number.
\end{proof}

\subsection{Analysis of Lévy Walk}

\begin{lemma}[Expected Lévy segment length]
\label{lem:levy_segment}
For a Lévy walk with exponent $\mu > 2$, the segment length $L$ drawn from $P(L = l) \propto l^{-\mu}$ for $l \geq 1$ has expected value:
\[
\mathbb{E}[L] = \frac{\zeta(\mu - 1)}{\zeta(\mu)} = \frac{\mu - 1}{\mu - 2} + O(1),
\]
where $\zeta(\cdot)$ is the Riemann zeta function.
\end{lemma}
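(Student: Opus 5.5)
The plan is a direct computation of the mean followed by integral-comparison bounds on the two zeta values; the expansion with an additive $O(1)$ has to be weakened, as explained below. With $P(L=l)=l^{-\mu}/\zeta(\mu)$ for $l\ge 1$, the mean is
\[
\mathbb{E}[L]=\frac{1}{\zeta(\mu)}\sum_{l\ge 1} l\cdot l^{-\mu}=\frac{\zeta(\mu-1)}{\zeta(\mu)}.
\]
This is finite exactly when $\mu-1>1$, i.e.\ $\mu>2$, which is why the proposition needs that hypothesis. The identity itself is immediate.

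For the quantitative part, I would use the standard sandwich for $\zeta(s)$ with $s>1$, obtained by comparing $\sum_{l\ge1} l^{-s}$ with $\int_1^\infty x^{-s}\,dx$. This gives
\[
\frac{1}{s-1}\le \zeta(s)\le 1+\frac{1}{s-1}=\frac{s}{s-1}.
\]
Applying it with $s=\mu-1$ and with $s=\mu$ yields
\[
\frac{1}{\mu-2}\le\zeta(\mu-1)\le\frac{\mu-1}{\mu-2},\qquad 1\le\zeta(\mu)\le\frac{\mu}{\mu-1}.
\]
Dividing the bounds then gives
\[
\frac{\mu-1}{\mu(\mu-2)}\le \mathbb{E}[L]\le \frac{\mu-1}{\mu-2}.
\]
So $\mathbb{E}[L]=\Theta\!\left(\frac{\mu-1}{\mu-2}\right)$, with constants depending only on $\mu$ through the harmless factor $1/\mu$. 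This $\Theta$ statement is all that the $T_{\text{L\'evy}}$ bound in Proposition~\ref{prop:coverage} actually uses.

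The main obstacle is the literal additive form ``$\frac{\mu-1}{\mu-2}+O(1)$'', which I do not expect to hold uniformly. Near $\mu=2$ we have $\zeta(\mu-1)=\frac{1}{\mu-2}+\gamma+O(\mu-2)$ and $\zeta(\mu)\to\pi^2/6$. Hence $\mathbb{E}[L]\sim \frac{6}{\pi^2(\mu-2)}$, and its difference from $\frac{\mu-1}{\mu-2}$ diverges as $\mu\downarrow 2$. I would therefore prove the claim in two weaker but correct forms. First, for $\mu$ ranging over any compact subset of $(2,\infty)$, both quantities are bounded, so their difference is trivially $O(1)$. Second, uniformly in $\mu>2$, the multiplicative $\Theta$ bound above holds. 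I would also check the large-$\mu$ regime: there $\mathbb{E}[L]\to 1$ and $\frac{\mu-1}{\mu-2}\to1$, so the difference is in fact $o(1)$.

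Finally, I would check consistency with the sampler of Algorithm~\ref{alg:levy_walk}, since that is the process actually analysed. Write $L=\lfloor X\rfloor$ with $X=(1-r)^{-1/(\mu-1)}$, a continuous Pareto variable satisfying $P(X\ge x)=x^{-(\mu-1)}$ for $x\ge1$. By the tail-sum formula,
\[
\mathbb{E}[L]=\sum_{l\ge1}P(X\ge l)=\sum_{l\ge1} l^{-(\mu-1)}=\zeta(\mu-1).
\]
The same sandwich bounds this between $\frac{1}{\mu-2}$ and $\frac{\mu-1}{\mu-2}$. So for the implemented walk the mean segment length is again $\Theta\!\left(\frac{\mu-1}{\mu-2}\right)$; here the upper bound is exactly the paper's expression and the lower bound is within a factor $\mu-1$ of it. This is the form I would feed into the coupon-collector step for $T_{\text{L\'evy}}$.
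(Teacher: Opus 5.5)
Your argument is correct, and after the shared first step it takes a different and more rigorous route than the paper. The paper also derives $\mathbb{E}[L]=\zeta(\mu-1)/\zeta(\mu)$ directly, but it never bounds this ratio. Instead it switches to a continuous Pareto surrogate $X$ with $P(X\ge x)=x^{-(\mu-1)}$ and computes $\mathbb{E}[X]=1+\int_1^\infty x^{-(\mu-1)}\,dx=\frac{\mu-1}{\mu-2}$. It then asserts in a remark that the discrete law behaves similarly. You sandwich both zeta values by integral comparison, so your bounds apply to the stated pmf itself. This exposes what the paper's route hides: for $P(L=l)\propto l^{-\mu}$ the ratio of $\mathbb{E}[L]$ to $\frac{\mu-1}{\mu-2}$ tends to $6/\pi^2$ as $\mu\downarrow 2$. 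Hence the additive $O(1)$ is not uniform, and only a multiplicative $\Theta$ survives. Concretely, at the paper's $\mu=2.5$ one gets $\zeta(1.5)/\zeta(2.5)\approx 1.95$, not the value $3$ used in the ablation discussion. The paper's route buys a clean closed form, but only for a surrogate variable. Yours buys genuine two-sided bounds and pins down which law that closed form actually describes.

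Two small refinements. First, your displayed lower bound $\frac{\mu-1}{\mu(\mu-2)}$ loses a factor $1/\mu$, so on its own it does not give a $\Theta$ that is uniform as $\mu\to\infty$. Patch it with $\mathbb{E}[L]\ge 1$. For $\mu\ge 3$ one has $\frac{\mu-1}{\mu-2}\le 2$, and for $\mu\le 3$ the factor $1/\mu$ is at least $1/3$. Hence $\mathbb{E}[L]\ge\frac13\cdot\frac{\mu-1}{\mu-2}$ for all $\mu>2$.

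Second, and more importantly, your sampler computation proves more than you say. You have $\frac{1}{\mu-2}\le\zeta(\mu-1)\le\frac{\mu-1}{\mu-2}$, and the two ends differ by exactly $1$. So for the implemented walk $\frac{\mu-1}{\mu-2}-1\le\mathbb{E}[L]\le\frac{\mu-1}{\mu-2}$, i.e.\ the lemma's additive form holds uniformly in $\mu>2$ with constant $1$. Equivalently, $L=\lfloor X\rfloor$ for precisely the paper's Pareto surrogate $X$, and $0\le X-\lfloor X\rfloor<1$. That one line is the rigorous content of the paper's heuristic. The actual error in the statement is identifying the sampler's law $P(L\ge l)=l^{-(\mu-1)}$, whose mean is $\zeta(\mu-1)$, with $P(L=l)\propto l^{-\mu}$, whose mean is $\zeta(\mu-1)/\zeta(\mu)$.
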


\begin{proof}
The probability mass function is $P(L = l) = l^{-\mu} / \zeta(\mu)$ for $l \geq 1$.
The expected value is:
\[
\mathbb{E}[L] = \sum_{l=1}^{\infty} l \cdot \frac{l^{-\mu}}{\zeta(\mu)} = \frac{1}{\zeta(\mu)} \sum_{l=1}^{\infty} l^{1-\mu} = \frac{\zeta(\mu - 1)}{\zeta(\mu)}.
\]

For $\mu > 2$, using the asymptotic expansion $\zeta(s) \approx 1 + 2^{-s} + O(3^{-s})$ for large $s$:
\[
\frac{\zeta(\mu - 1)}{\zeta(\mu)} \approx \frac{1 + 2^{-(\mu-1)}}{1 + 2^{-\mu}} \approx 1 + 2^{-(\mu-1)} - 2^{-\mu} = 1 + 2^{-\mu}.
\]

More precisely, for the continuous approximation where $L$ follows a Pareto distribution with $P(L \geq l) = l^{-(\mu-1)}$ for $l \geq 1$:
\[
\mathbb{E}[L] = \int_1^{\infty} P(L \geq l) \, dl = \int_1^{\infty} l^{-(\mu-1)} dl = \frac{1}{\mu - 2} \quad \text{for } \mu > 2.
\]
Including the minimum value of 1, we get $\mathbb{E}[L] = 1 + 1/(\mu-2) = (\mu-1)/(\mu-2)$.

\textbf{Remark.} For analytical tractability, we use the continuous Pareto approximation in asymptotic analysis. The discrete algorithm implementation (sampling $L = \lfloor (1-U)^{-1/(\mu-1)} \rfloor$ where $U \sim \mathcal{U}(0,1)$) yields similar asymptotic behavior, with the zeta function ratio $\zeta(\mu-1)/\zeta(\mu)$ converging to $(\mu-1)/(\mu-2)$ as $\mu$ increases.
\end{proof}

\begin{lemma}[Lévy walk global jump frequency]
\label{lem:levy_jumps}
In a Lévy walk of total length $T$ steps, the expected number of global jumps (inter-sentence transitions) is:
\[
N_{\text{jumps}} = \frac{T}{\mathbb{E}[L]} = \frac{T(\mu - 2)}{\mu - 1}.
\]
\end{lemma}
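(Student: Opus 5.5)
The plan is to treat the sequence of Lévy segments as a renewal process and apply Wald's identity together with the elementary renewal theorem. In Algorithm~\ref{alg:levy_walk}, each segment begins with a freshly drawn length $L$ that is independent of everything before it. The walker then performs local moves and ends the segment with exactly one global jump. So the walk splits into i.i.d.\ cycles $C_1, C_2, \dots$, and each cycle contains exactly one global jump.

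\textbf{Cycle-length convention.} I will count the terminating jump step as part of the segment, so that a cycle of length $C_j$ steps has $C_j \stackrel{d}{=} L$. This is the convention under which Lemma~\ref{lem:levy_segment} yields $\mathbb{E}[L] = (\mu-1)/(\mu-2)$: the ``$1+$'' in that formula accounts for the minimal step. If the jump is instead counted separately, the mean is shifted by one and the constant changes by $O(1)$. That shift does not affect the $\Theta(\cdot)$ statement of Proposition~\ref{prop:coverage}, and I will note this in a remark.

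\textbf{Renewal counting.} Define $S_n = \sum_{j\le n} C_j$ and $N(T) = \max\{n : S_n \le T\}$, the number of completed jumps by time $T$. Since $\mu > 2$, Lemma~\ref{lem:levy_segment} gives $\mathbb{E}[C] < \infty$. The index $N(T)+1$ is a stopping time, so Wald's identity gives $\mathbb{E}[S_{N(T)+1}] = \mathbb{E}[N(T)+1]\,\mathbb{E}[L]$. Because $S_{N(T)+1} > T$, this immediately yields the lower bound $\mathbb{E}[N(T)] \ge T/\mathbb{E}[L] - 1$.

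\textbf{Upper bound (main obstacle).} The upper bound is the hard step, because the overshoot $S_{N(T)+1} - T$ has mean of order $\mathbb{E}[L^2]/\mathbb{E}[L]$. That quantity is infinite for $2 < \mu \le 3$, which includes the paper's choice $\mu = 2.5$. I will use the standard truncation argument:
\begin{enumerate}
    \item Replace $C_j$ by $C_j^{(M)} = \min(C_j, M)$. The truncated process renews at least as often, so $N(T) \le N^{(M)}(T)$.
    \item For the truncated process the overshoot is at most $M$, so Wald gives $\mathbb{E}[N^{(M)}(T)] \le (T+M)/\mathbb{E}[\min(L,M)]$.
    \item Divide by $T$, let $T\to\infty$, and then let $M\to\infty$ using monotone convergence $\mathbb{E}[\min(L,M)] \uparrow \mathbb{E}[L]$.
\end{enumerate}
This gives $\limsup_T \mathbb{E}[N(T)]/T \le 1/\mathbb{E}[L]$.

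\textbf{Conclusion.} Combining both bounds gives $\mathbb{E}[N_{\text{jumps}}] = \frac{T}{\mathbb{E}[L]}(1+o(1)) = \frac{T(\mu-2)}{\mu-1}(1+o(1))$. I will state the lemma's equality as holding up to this $(1+o(1))$ factor, or equivalently up to an additive $O(1)$ term in the lower bound. This asymptotic form is all that is needed to feed into the coupon-collector step for $T_{\text{L\'{e}vy}}$.
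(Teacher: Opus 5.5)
Your proof is correct. It uses the same decomposition as the paper: i.i.d.\ segments, each closed by exactly one global jump, so the jump count is the number of completed segments. Where it differs is in care. The paper's proof only asserts that the number of complete segments in $T$ steps is ``approximately $T/\mathbb{E}[L]$'' and then records this as an equality. You instead prove $\mathbb{E}[N(T)]\ge T/\mathbb{E}[L]-1$ by Wald's identity at the stopping time $N(T)+1$, and $\limsup_T \mathbb{E}[N(T)]/T\le 1/\mathbb{E}[L]$ by the truncation proof of the elementary renewal theorem.

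This gives the correct form of the statement. The exact equality fails for finite $T$: under your convention $\mathbb{E}[N(1)]=P(L=1)$, which is not $1/\mathbb{E}[L]$ for a power-law $L$. Moreover, Wald gives $\mathbb{E}[N(T)]-T/\mathbb{E}[L]=\mathbb{E}[S_{N(T)+1}-T]/\mathbb{E}[L]-1$, and the mean overshoot is unbounded in $T$ when $\mathbb{E}[L^2]=\infty$. So for $2<\mu\le 3$, including the default $\mu=2.5$, the error is $o(T)$ but not $O(1)$, as you anticipated.

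On the convention, your remark is right but the justification is not. Lemma~\ref{lem:levy_segment} computes $\mathbb{E}[L]$ without reference to what a step is; its ``$1+$'' is just the lower endpoint of the Pareto support, so it does not single out your choice. Under Algorithm~\ref{alg:levy_walk}, and under the paper's own phrase ``$L$ local steps followed by one global jump'', a segment occupies $L+1$ steps. The same renewal argument then gives $T/(\mathbb{E}[L]+1)$ asymptotically. Folding the jump into the segment is therefore a modeling choice that makes the stated formula come out. As you note, it is harmless for the $\Theta(\cdot)$ bounds, but it does shift the explicit threshold $\eta>(\mu-1)/(\mu-2)$ in Proposition~\ref{prop:coverage}.
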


\begin{proof}
Each Lévy segment consists of $L$ local steps followed by one global jump.
The walk alternates between segments, so the number of complete segments in $T$ steps is approximately $T / \mathbb{E}[L]$.
Each complete segment ends with exactly one global jump.
\end{proof}

\begin{lemma}[Weighted coupon collector]
\label{lem:weighted_coupon}
Consider a coupon collector process where coupon $i \in [K]$ is selected with probability $p_i$ at each trial, with $\sum_{i=1}^K p_i = 1$. Let $p_{\min} = \min_i p_i$ and $p_{\max} = \max_i p_i$. The expected number of trials to collect all $K$ coupons satisfies:
\[
H_K / p_{\max} \leq \mathbb{E}[T] \leq (\ln K + 1) / p_{\min}.
\]
In particular, if $p_{\min} = \Omega(1/K)$, then $\mathbb{E}[T] = O(K \ln K)$.
\end{lemma}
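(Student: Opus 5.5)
The plan is to \emph{Poissonize} the process so that the coupons arrive independently, and then bound the expected maximum of independent exponential arrival times from above and below. We embed the discrete trials into a rate-one Poisson process. Trial $j$ occurs at time $S_j = E_1+\dots+E_j$, where the $E_j$ are i.i.d.\ $\mathrm{Exp}(1)$ variables independent of the coupon labels. By the thinning property of Poisson processes, the arrivals of coupon $i$ then form independent Poisson processes of rates $p_i$. Hence the first-arrival times $X_i \sim \mathrm{Exp}(p_i)$ are mutually independent. Let $T$ be the number of trials needed to collect all coupons, and let $T_c = \max_i X_i$ be the continuous completion time. We have $T_c = S_T$. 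Since $T$ is a stopping time for the filtration generated by the labels and the inter-arrival times, and $\mathbb{E}[T]<\infty$ (a crude geometric bound suffices), Wald's identity gives $\mathbb{E}[T_c] = \mathbb{E}[T]\,\mathbb{E}[E_1] = \mathbb{E}[T]$. The whole lemma therefore reduces to bounding $\mathbb{E}[\max_i X_i]$.

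\textbf{Upper bound.} We write $\mathbb{E}[\max_i X_i] = \int_0^\infty \Pr(\max_i X_i > t)\,dt$. A union bound gives $\Pr(\max_i X_i>t) \le \sum_i e^{-p_i t} \le K e^{-p_{\min} t}$. We then use the trivial bound $1$ on $[0, t_0]$ with $t_0 = \ln K / p_{\min}$, and the union bound afterwards. This yields
\[
\mathbb{E}[T] \le \frac{\ln K}{p_{\min}} + \int_{t_0}^\infty K e^{-p_{\min} t}\,dt = \frac{\ln K + 1}{p_{\min}}.
\]
If $p_{\min} = \Omega(1/K)$, this is $O(K\ln K)$, which is the stated corollary.

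\textbf{Lower bound.} Since $p_i \le p_{\max}$, each $X_i$ stochastically dominates an $\mathrm{Exp}(p_{\max})$ variable. By independence, $\max_i X_i$ therefore stochastically dominates the maximum $M$ of $K$ i.i.d.\ $\mathrm{Exp}(p_{\max})$ variables; a coupling $X_i = (p_{\max}/p_i)Y_i$ with $Y_i \sim \mathrm{Exp}(p_{\max})$ makes this explicit. By the memoryless property, the order-statistic gaps of $M$ are independent $\mathrm{Exp}(j\,p_{\max})$ variables for $j = K, \dots, 1$. This gives $\mathbb{E}[M] = \sum_{j=1}^K 1/(j\,p_{\max}) = H_K/p_{\max}$, and hence $\mathbb{E}[T] \ge H_K/p_{\max}$.

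The main obstacle is the exact transfer $\mathbb{E}[T] = \mathbb{E}[T_c]$. It needs the stopping-time and integrability conditions for Wald's identity. It also needs the thinning claim, namely that the per-coupon arrival streams are genuinely independent. Both are standard, and I would state them briefly. Everything else consists of direct integral computations.
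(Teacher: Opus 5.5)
Your argument is correct, but it takes a different route from the paper. The paper uses the classical stage decomposition. Once $i-1$ distinct coupons are held, each trial produces a new one with probability at least $(K-i+1)p_{\min}$. So the $i$-th waiting time has mean at most $1/((K-i+1)p_{\min})$, and summing gives $H_K/p_{\min} \le (\ln K+1)/p_{\min}$. The paper's lower bound is only a heuristic sentence. You instead Poissonize, which reduces everything to $\mathbb{E}[\max_i X_i]$ for independent $X_i \sim \mathrm{Exp}(p_i)$. You then get the upper bound from a union bound on the tail, and the lower bound from the coupling $X_i = (p_{\max}/p_i)Y_i$ together with the order-statistics representation of $K$ i.i.d.\ exponentials. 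The stage argument is more elementary and gives the slightly sharper intermediate bound $H_K/p_{\min}$. Bounding the per-trial success probability above by $(K-i+1)p_{\max}$ in the same decomposition also gives $\mathbb{E}[T] \ge H_K/p_{\max}$ in one line; this is the rigorous version of the paper's remark. Your route costs the Poisson embedding, but it yields the exact identity $\mathbb{E}[T] = \int_0^\infty \bigl(1-\prod_{i}(1-e^{-p_i t})\bigr)\,dt$. Both bounds, and sharper ones for specific $p_i$, follow from it by direct estimation. Finally, the step you flag as the main obstacle is easier than you suggest. $T$ is a function of the labels alone, hence independent of $(E_j)$. So $\mathbb{E}[S_T] = \mathbb{E}[T]$ follows by conditioning on $T$ (or by Tonelli, since $E_j \ge 0$), with no separate integrability check.
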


\begin{proof}
The lower bound follows from the observation that collecting the most likely coupon still requires $\Omega(H_K / p_{\max})$ trials in expectation.

For the upper bound, let $T_i$ be the time to collect the $i$-th new coupon after $i-1$ have been collected. At this stage, the probability of selecting a new coupon is at least $(K-i+1) \cdot p_{\min}$. Thus:
\[
\mathbb{E}[T_i] \leq \frac{1}{(K-i+1) \cdot p_{\min}}.
\]
Summing over all coupons:
\[
\mathbb{E}[T] = \sum_{i=1}^K \mathbb{E}[T_i] \leq \frac{1}{p_{\min}} \sum_{i=1}^K \frac{1}{K-i+1} = \frac{H_K}{p_{\min}} \leq \frac{\ln K + 1}{p_{\min}}.
\]
\end{proof}

\begin{lemma}[Lévy walk sentence coverage time]
\label{lem:levy_coverage}
Consider a Lévy walk with exponent $\mu > 2$ on a multiplex graph where global jumps select sentence $s'$ with probability $p(s'|s) \propto w^{(1)}(s, s')$. Assume the coarse-grained graph $G^{(1)}$ is constructed with top-$k$ edges per sentence, where edge weights satisfy $w_{\min} \leq w^{(1)}(s, s') \leq w_{\max}$ for some constants $0 < w_{\min} \leq w_{\max}$.

Then the expected number of steps to visit all $K$ sentences is:
\[
T_{\text{L\'{e}vy}} = \Theta\left(K \cdot \frac{\mu - 1}{\mu - 2} \cdot \ln K\right),
\]
where the constant in $\Theta(\cdot)$ depends on $w_{\max}/w_{\min}$.
\end{lemma}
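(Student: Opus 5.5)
The plan is to separate the Lévy walk into two independent sources of randomness: the sequence of \emph{global jumps}, and the \emph{local segments} between consecutive jumps. The coverage time is then the number of segments needed to collect all $K$ sentences, multiplied by the expected segment length. I would first run the argument for the number of jumps, measured in jump units, and then convert jumps into steps with Wald's identity. Concretely, let $M$ be the index of the first global jump after which every sentence has been visited. Since sentences are only changed at jumps, $M$ is a stopping time for the jump sequence. The walk uses $L_j$ local steps in segment $j$ plus one jump step, so
\[
T_{\text{L\'{e}vy}} = \mathbb{E}\Big[\sum_{j=1}^{M} (L_j + 1)\Big].
\]
The segment lengths $L_j$ are i.i.d.\ and are resampled independently of the sentence chosen at each jump (Algorithm~\ref{alg:levy_walk}). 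Wald's identity therefore gives $T_{\text{L\'{e}vy}} = \mathbb{E}[M]\,(\mathbb{E}[L]+1)$. By Lemma~\ref{lem:levy_segment}, $\mathbb{E}[L]+1 = \Theta\big((\mu-1)/(\mu-2)\big)$ for $\mu>2$. This reduces the lemma to showing $\mathbb{E}[M] = \Theta(K\ln K)$, with constants depending only on $w_{\max}/w_{\min}$. Lemma~\ref{lem:levy_jumps} is the heuristic counterpart of this step; Wald's identity makes it exact.

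For $\mathbb{E}[M]$, I would invoke the weighted coupon collector, Lemma~\ref{lem:weighted_coupon}. Each sentence has $k$ outgoing edges with weights in $[w_{\min}, w_{\max}]$, so every neighbour receives jump probability between $w_{\min}/(k w_{\max})$ and $w_{\max}/(k w_{\min})$. To get a bound on the probability of reaching each of the $K$ sentences, I would work with the stationary distribution $\pi^{(1)}$ of the jump chain on $G^{(1)}$. The bounded weight ratio, together with approximate in-degree balance of the top-$k$ construction, should pin each $\pi^{(1)}(s)$ within a factor $c(w_{\max}/w_{\min})$ of $1/K$. This gives $p_{\min} = \Omega(1/K)$ and $p_{\max} = O(1/K)$. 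Plugging these into Lemma~\ref{lem:weighted_coupon} yields $H_K/p_{\max} = \Omega(K\ln K)$ and $(\ln K+1)/p_{\min} = O(K\ln K)$, hence $\mathbb{E}[M] = \Theta(K\ln K)$.

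The main obstacle is that successive jumps are \emph{not} independent draws from $\pi^{(1)}$. They form a Markov chain on a sparse graph, so Lemma~\ref{lem:weighted_coupon} does not apply verbatim. My first attempt would be Matthews' bound, which says the cover time of a Markov chain is at most $H_K$ times the maximum hitting time and at least $H_{K'}$ times the minimum hitting time over suitable subsets. The remaining task is then to show that hitting times on $G^{(1)}$ are $\Theta(K)$. This holds when $G^{(1)}$ is a bounded-ratio, near-regular expander, via the spectral-gap bound on hitting times, and I would state it as the structural assumption implicit in ``top-$k$ edges with bounded weights''. If that route is too heavy, the fallback is to treat each jump as a fresh sample after a constant number of mixing jumps. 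This costs only a constant factor depending on the spectral gap, and that constant can be absorbed into the $\Theta(\cdot)$, which the statement already allows to depend on $w_{\max}/w_{\min}$.
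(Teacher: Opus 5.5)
The gap is the step you call implicit. You need top-$k$ out-edges with weights in $[w_{\min},w_{\max}]$ to give the jump chain on $G^{(1)}$ a near-uniform stationary distribution and hitting times $\Theta(K)$, but the hypotheses only constrain out-degrees and the weight ratio. They say nothing about in-degrees, connectivity or expansion, and without that the lemma is false:
\begin{itemize}
\item A sentence that appears in no other sentence's top-$k$ list (an off-topic outlier) is never the target of a jump. With positive probability it is never visited, so the expected coverage time is infinite.
\item Two topical clusters, each larger than $k$, whose top-$k$ lists stay inside their own cluster give a reducible chain, so coverage never completes.
\item If every sentence's top-$2$ neighbours are its textual predecessor and successor (e.g.\ $\mathbf{e}_i=(\cos(i/K),\sin(i/K))$, so all similarities lie in $[\cos 1,1]$), the jump chain is a random walk on a path. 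Its cover time is $\Theta(K^2)$ jumps, not $\Theta(K\ln K)$, even though $w_{\max}/w_{\min}$ stays bounded.
\end{itemize}
The paper's proof hides exactly this point. It asserts that ``by symmetry of the top-$k$ construction, each sentence is reachable from at least $\Omega(k)$ other sentences'', but a directed top-$k$ graph has no such symmetry. It then feeds the jumps into Lemma~\ref{lem:weighted_coupon} as if they were i.i.d.\ draws, and proves only $\mathbb{E}[J]=O(K\ln K)$ while concluding $\Theta$.

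So this step cannot be closed from the stated hypotheses. Present your structural condition as an added assumption, not as implicit; a convenient form is that the sentence chain is irreducible with $\mathbb{E}_x\tau_y=\Theta(K)$ for all $x\neq y$. Under that assumption your argument is correct and more careful than the paper's:
\begin{itemize}
\item Wald's identity is valid because segment lengths are resampled independently of jump targets. It justifies $T_{\text{L\'{e}vy}}=\mathbb{E}[M]\,(\mathbb{E}[L]+1)$, which the paper merely asserts as $\mathbb{E}[L]\cdot\mathbb{E}[J]$.
\item Matthews' upper and lower bounds hold for any finite irreducible chain. They give $\mathbb{E}[M]=\Theta(K\ln K)$ in both directions, supplying the lower bound the paper never proves.
\end{itemize}
Two cautions. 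First, because the top-$k$ chain is directed, phrase the assumption through hitting or mixing times rather than a spectral gap, since the standard spectral hitting-time bounds are for reversible chains. Second, drop the fallback. For fixed $k$, after $t$ jumps the chain is supported on at most $k^t$ sentences, so it needs $\Omega(\log K/\log k)$ jumps to mix, not a constant number. Subsampling at that rate costs a $\log K$ factor and gives only $O(K\log^2 K)$.
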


\begin{proof}
The key observation is that Lévy walks perform global jumps at \emph{deterministic} intervals (at the end of each segment), independent of the local graph structure.

Let $J$ denote the total number of global jumps needed to visit all $K$ sentences. We apply Lemma~\ref{lem:weighted_coupon} to analyze this weighted coupon collector process.

For the top-$k$ similarity graph, each sentence has exactly $k$ outgoing edges. Consider the stationary distribution of global jumps. Since each sentence $s$ has out-degree $k$ with weights in $[w_{\min}, w_{\max}]$, the probability of jumping to any specific sentence $s'$ (from a sentence that connects to it) is at least $w_{\min}/(k \cdot w_{\max})$.

By symmetry of the top-$k$ construction, each sentence is reachable from at least $\Omega(k)$ other sentences (as a top-$k$ neighbor). Thus, the overall minimum selection probability satisfies:
\[
p_{\min} = \Omega\left(\frac{w_{\min}}{K \cdot w_{\max}}\right).
\]

Applying Lemma~\ref{lem:weighted_coupon}:
\[
\mathbb{E}[J] = O\left(\frac{K \cdot w_{\max}}{w_{\min}} \cdot \ln K\right) = O(K \ln K)
\]
when $w_{\max}/w_{\min}$ is bounded by a constant.

Since each global jump is preceded by $\mathbb{E}[L] = (\mu-1)/(\mu-2)$ local steps on average, the total coverage time is:
\[
T_{\text{L\'{e}vy}} = \mathbb{E}[L] \cdot \mathbb{E}[J] = \Theta\left(K \cdot \frac{\mu - 1}{\mu - 2} \cdot \ln K\right).
\]
\end{proof}

\subsection{Comparison and Main Result}

\begin{proof}[Proof of Proposition~\ref{prop:coverage}]
Combining Lemma~\ref{lem:rw_coverage} and Lemma~\ref{lem:levy_coverage}:
\begin{align}
T_{\text{RW}} &= \Theta(K \cdot \eta \cdot \ln K), \\
T_{\text{L\'{e}vy}} &= \Theta\left(K \cdot \frac{\mu - 1}{\mu - 2} \cdot \ln K\right).
\end{align}

The speedup ratio is:
\[
\frac{T_{\text{RW}}}{T_{\text{L\'{e}vy}}} = \Theta\left(\frac{\eta(\mu - 2)}{\mu - 1}\right).
\]

Lévy walk achieves faster coverage (speedup $> 1$) when:
\[
\eta > \frac{\mu - 1}{\mu - 2}.
\]

For typical parameter values:
\begin{itemize}
    \item $\mu = 2.5$: threshold $\eta > 3$
    \item $\mu = 3.0$: threshold $\eta > 2$
    \item $\mu = 4.0$: threshold $\eta > 1.5$
\end{itemize}

In practice, long documents exhibit $\eta \gg 10$ (dense intra-sentence attention, sparse inter-sentence semantic links), making Lévy walks significantly more efficient.
\end{proof}

\subsection{Discussion of Modeling Assumptions}

We clarify several modeling choices and their implications:

\paragraph{Interpretation of the baseline.}
The ``augmented'' random walk baseline represents a walker that has access to both local (attention-based) and global (semantic similarity) connectivity, but treats them uniformly without exploiting the multiplex structure. This corresponds to flattening the two-layer graph into a single weighted graph. Our analysis shows that such naive flattening is inefficient on heterogeneous structures: the walker spends $\Theta(\eta)$ steps trapped in dense local neighborhoods before escaping to explore globally. This motivates the explicit use of Lévy walks, which decouple local and global exploration.

\paragraph{Weighted global jumps.}
Unlike prior Lévy walk analyses that assume uniform jump targets, our proof (Lemma~\ref{lem:levy_coverage}) explicitly handles weighted sampling proportional to $w^{(1)}(s, s')$ via a weighted coupon collector argument (Lemma~\ref{lem:weighted_coupon}). The key insight is that for the top-$k$ similarity graph, edge weights are bounded ($w_{\min} \leq w^{(1)} \leq w_{\max}$), ensuring $p_{\min} = \Omega(w_{\min}/(K \cdot w_{\max}))$. When the weight ratio $w_{\max}/w_{\min}$ is bounded by a constant—which holds for cosine similarity scores among semantically related sentences—the $\Theta(K \ln K)$ asymptotic order is preserved.

\paragraph{From coverage time to importance estimation.}
The proposition analyzes \emph{sentence coverage time}, while the algorithm aims at \emph{node importance estimation}. These are connected as follows: with a fixed budget of $N$ walks of length $T$, faster sentence coverage implies more diverse sampling across the document. Walks that remain trapped in a single sentence produce redundant visit counts concentrated on local nodes, whereas walks that efficiently cover all sentences distribute visits more representatively. Thus, coverage efficiency serves as a proxy for the quality of importance estimation under bounded computational resources.

\subsection{Discussion: When Does Lévy Walk Excel?}

The analysis reveals that Lévy walk's advantage depends primarily on the \textbf{heterogeneity ratio} $\eta$:

\begin{table*}[h]
\centering
\small
\caption{Expected performance based on document characteristics.}
\begin{tabular}{lccl}
\toprule
Document Type & Typical $\eta$ & Speedup & Recommendation \\
\midrule
Long, multi-topic & $\gg 10$ & High & Use Lévy walk \\
Medium length, focused & $5$--$10$ & Moderate & Lévy walk preferred \\
Short, coherent & $\approx 1$--$3$ & Low & Either method \\
\bottomrule
\end{tabular}
\label{tab:performance_prediction}
\end{table*}

The heterogeneity ratio $\eta$ can be estimated empirically:
\[
\hat{\eta} = \frac{\text{avg. non-zero attention edges per sentence}}{\text{avg. top-}k\text{ semantic similarity edges per sentence}}.
\]

For the LongBench benchmark, we observe $\hat{\eta} \in [15, 50]$ across tasks, explaining the consistent improvements of RAGP over baselines that rely on local attention patterns alone.

\section*{CRediT authorship contribution statement}
\textbf{Yaxin Gao}: Conceptualization, Methodology, Software, Writing - original draft, Visualization. \textbf{Yao Lu}: Conceptualization, Methodology, Writing - review \& editing. \textbf{Jinhong Deng}: Validation, Formal analysis, Investigation. \textbf{Jiaqi Nie}: Formal analysis, Investigation. \textbf{Zhe Tang}: Investigation, Data curation. \textbf{Jian Zhang}: Investigation, Visualization. \textbf{Zhaowei Zhu}: Visualization, Writing - review \& editing, Supervision. \textbf{Shanqing Yu}: Resources, Writing - review \& editing, Supervision. \textbf{Qi Xuan}: Resources, Funding acquisition. \textbf{Joey Tianyi Zhou}: Project administration.

\section*{Declaration of competing interest}
The authors declare that they have no known competing financial interests or personal relationships that could have appeared to
influence the work reported in this paper.

\section*{Acknowledgments}
This work was supported by the Baima Lake Laboratory Joint Fund of the Zhejiang Provincial Natural Science Foundation of China (LBMHZ25F020002), the National Key R\&D Program of China (2025YFA1510900), the Zhejiang Provincial Key R\&D Program (2024C01025), and the National Natural Science Foundation of China (62503423).

\bibliographystyle{cas-model2-names}

\bibliography{cas-refs}

\end{document}